\documentclass{article}

\usepackage{amssymb}
\usepackage{natbib}
\usepackage{color}
\usepackage{bbm}
\usepackage[hidelinks,colorlinks=true,linkcolor=blue,citecolor=blue,breaklinks=true]{hyperref}    
\usepackage{amsfonts}       
\usepackage{xurl}
\usepackage{url}
\usepackage{amsmath}
\usepackage{amsthm}

\usepackage{wrapfig}
\usepackage{mathtools}
\usepackage{tikz}
\usepackage{subfig}
\usepackage{algorithmic}
\usepackage{subfig}
\usepackage{footmisc}
\usepackage{bibentry}
\usepackage{thmtools, thm-restate}
\nobibliography*
\usepackage{mathtools}

\usepackage{physics}
\usepackage{mathrsfs}
\mathtoolsset{showonlyrefs}
\usepackage{etoolbox}
\usepackage{makecell}
\usepackage[ruled]{algorithm2e}
\DontPrintSemicolon
\usepackage{enumerate}
\usepackage{booktabs}
\usepackage{pifont}% http://ctan.org/pkg/pifont
\usepackage{soul}
\usepackage{ulem}
\usepackage{cancel}
\usepackage{microtype}
\newcommand{\fS}{\mathcal{S}}
\newcommand{\fA}{\mathcal{A}}

\newcommand{\fF}{\mathcal{F}}
\newcommand{\fO}{\mathcal{O}}

\newcommand{\R}[1][]{\mathbb{R}^{#1}}

\newcommand{\E}{\mathbb{E}}

\newcommand{\explain}[1]{\tag*{(#1)}}
\newcommand{\cmark}{\ding{51}}%

\newcommand{\inner}[2]{\langle #1, #2 \rangle}

\usepackage{pdfrender}

\allowdisplaybreaks

\newcounter{assucounter}
\numberwithin{assucounter}{section}
\newtheorem{assumption}[assucounter]{Assumption}

\newtheorem{condition}[assucounter]{Condition}

\newtheorem{remark}{Remark}
\newtheorem{theorem}{Theorem}%[section]
\newtheorem{proposition}{Proposition}
\newtheorem{lemma}{Lemma}
\newtheorem{corollary}{Corollary}

\usepackage[preprint]{neurips_2026}

\usepackage[utf8]{inputenc} % allow utf-8 input
\usepackage[T1]{fontenc}    % use 8-bit T1 fonts
\usepackage{url}            % simple URL typesetting
\usepackage{booktabs}       % professional-quality tables
\usepackage{amsfonts}       % blackboard math symbols
\usepackage{nicefrac}       % compact symbols for 1/2, etc.
\usepackage{microtype}      % microtypography
\usepackage{xcolor}         % colors

\title{Almost Sure Convergence Rates of Stochastic Approximation and Reinforcement Learning \\ via a Poisson-Moreau Drift}

\author{%
Xinyu Liu \\
University of Virginia\\
\texttt{xinyuliu@virginia.edu}
\And
Zixuan Xie \\
University of Virginia\\
\texttt{xie.zixuan@email.virginia.edu}
\And
Shangtong Zhang \\
University of Virginia\\
\texttt{shangtong@virginia.edu}
}
\begin{document}

\maketitle

\begin{abstract}
Establishing almost sure convergence rates for stochastic approximation and reinforcement learning under Markovian noise is a fundamental theoretical challenge.
We make progress towards this challenge for a class of stochastic approximation algorithms whose expected updates are contractive, a setting that arises in many reinforcement learning algorithms such as $Q$-learning and linear temporal difference learning.
Specifically,
for a power-law learning rate $\fO(n^{-\eta})$ with $\eta \in (1/2, 1)$,
we obtain an almost sure convergence rate arbitrarily close to $o(n^{1 - 2\eta})$.
For a harmonic learning rate $\fO(n^{-1})$, we obtain an almost sure convergence rate arbitrarily close to $o(n^{-1})$,
which we argue is a strong result because it is close to the optimal rate $\fO(n^{-1}\log\log n)$ given by the law of the iterated logarithm (for a special case of i.i.d. noise).
Key to our analysis is a novel Lyapunov drift construction that applies a Poisson-equation based correction for Markovian noise to the well-established Moreau-envelope smoothing for the contractive mapping.
% for every $\zeta < \min\{1, \mu_\xi \alpha\}$, where $\alpha$ is the stepsize constant and $\mu_\xi$ is a smoothed contraction coefficient determined by the contraction ratio under the target norm and a Moreau smoothing parameter $\xi$.
% We introduce the Poisson-Moreau drift, a shifted Lyapunov framework that couples a Poisson-equation correction for Markovian bias with a Moreau-envelope smoothing of arbitrary-norm contractions. For power-law stepsizes $\alpha_n=\alpha(n+1)^{-\eta}$ with $\frac{1}{2}<\eta<1$, our approach proves polynomial almost sure squared-error rates with exponent $\zeta<2\eta-1$. On the common range $2/3<\eta<1$, this improves the exponent of the closest existing arbitrary-norm Markovian almost sure result by $\eta/2$; for $1/2<\eta\leq 2/3$, it fills a gap left by that result. For the harmonic stepsize $\eta=1$, the same framework gives polynomial rates for every $\zeta<\min\{1,\mu_\xi\alpha\}$ under the Poisson regularity assumptions of this paper, improving over previously subpolynomial pathwise guarantees.
\end{abstract}

\section{Introduction}
\label{sec:intro}
Introduced by \citet{robbins1951stochastic}, stochastic approximation (SA) remains a foundational framework for learning from continuous streams of noisy observations. The canonical recursion, 
\begin{align}
    \textstyle\theta_{n+1} = \theta_n + \alpha_n F(\theta_n, Y_n), 
\end{align}
where $\theta_n \in \mathbb{R}^d$ is the active estimate and $\alpha_n$ is a decaying learning rate, is remarkably versatile. It natively captures stochastic gradient descent (SGD) algorithms \citep{robbins1951stochastic} as well as core reinforcement learning (RL, \citet{sutton2018reinforcement}) algorithms like $Q$-learning \citep{watkins1992q} and linear temporal difference (TD) learning \citep{sutton1988learning}. Over the decades, a rich classical theory has been developed to answer the basic consistency question: does the sequence $\{\theta_n\}$ converge? While convergence can be analyzed in probability or in $L^p$, our focus is strictly on almost sure convergence. This mode is particularly vital because it guarantees the limiting behavior of almost every individual sample path, offering a stronger, trajectory-wise assurance than average-case or tail-probability metrics.

% Stochastic approximation (SA), introduced by \citet{robbins1951stochastic}, is one of the foundational frameworks for learning from a stream of noisy observations. In its simplest form, it generates iterates
% \begin{align}
%     \textstyle\theta_{n+1} = \theta_n + \alpha_n F(\theta_n, Y_n), 
% \end{align}
% where $\theta_n \in \mathbb{R}^d$ is the current estimate, $\alpha_n$ is a decreasing learning rate \sz{change all stepszie to learning rate}\xl{done}, and $Y_n$ is the observed data (i.e., noise). 
% This recursion is remarkably general.
% It encompasses both stochastic gradient descent (SGD) methods \citep{robbins1951stochastic} and a family of reinforcement learning (RL, \citet{sutton2018reinforcement}) methods such as linear temporal difference (TD) learning \citep{sutton1988learning} and $Q$-learning \citep{watkins1992q}. 
% Over the decades, a rich classical theory has been developed to answer the most basic consistency question: do the iterates $\qty{\theta_n}$ converge?
% There are certainly many modes of convergence, such as almost sure convergence, $L^p$ convergence, and convergence with high probability. 
% In this work,
% we focus on almost sure convergence, since it is generally not implied by other modes of convergence and it captures the behavior of almost every sample path rather than just an average or a tail probability.

Historically, almost sure convergence is established via the ODE method, see, e.g., \citet{benveniste1990MP,kushner2003stochastic,borkar2009stochastic,borkar2025ode,liu2025ode}. 
While powerful, these techniques typically fall short of quantifying exactly how fast a realized trajectory approaches its limit, i.e., they establish only asymptotic almost sure convergence without a rate. An alternative path leverages the almost-supermartingale framework pioneered by \citet{robbins1971convergence}. While their original work did not immediately extract convergence rates, a recent wave of literature has successfully adapted the almost-supermartingale approach to derive explicit almost sure rates under various conditions \mbox{\citep{liu2024almost,karandikar2024convergence,qian2024almost,liu2025extensions}}. Beyond supermartingale arguments, researchers have derived almost sure rates by imposing stricter structural assumptions. In the realm of linear SA, for example, \citet{chong1999noise} and \citet{tadic2004almost} handle Markovian sequences by exploiting the product-of-matrices dynamics unique to affine updates, whereas \citet{kouritzin2015convergence} achieve similar linear results assuming independent noise. When moving to general nonlinear operators, however, the literature generally bottlenecks at the noise assumptions. \citet{koval2003law} and \citet{vidyasagar2023convergence} establish fundamental convergence rate guarantees, yet their analyses heavily rely on the noise $\{Y_n\}$ being fully independent. \citet{karandikar2024convergence} attempt to relax this to allow non-i.i.d. sequences; however, their framework mandates that the conditional bias decays fast enough to remain summable, an assumption that fundamentally breaks down under standard Markovian noise where the mixing rate is constant. Within the specific domain of RL, notable early milestones include \citet{szepesvari1997asymptotic} for $Q$-learning and \citet{tadic2002almost} for linear TD. Yet, \citet{szepesvari1997asymptotic} relies heavily on count-based learning rates, making it an essentially i.i.d. sampling protocol.

To summarize, although there are many results on almost sure convergence rates for SA and RL, 
if we consider two desiderata that are characteristic of modern RL algorithms, namely
\begin{itemize}
  \item[(i)] the noise $\qty{Y_n}$ forms a Markov chain, and
  \item[(ii)] the expectation of $F$ is not necessarily linear and is contractive only in some possibly non-Euclidean norm such as $\ell_\infty$,
\end{itemize}
%  (i) the noise $\qty{Y_n}$ forms a Markov chain and (ii) the expectation of $F$ is not necessarily linear and is contractive only in some possibly non-Euclidean norm such as $\ell_\infty$,
the only existing result that handles both, to the best of our knowledge, is the recent work \citet{qian2024almost},
which establish almost sure convergence rates for arbitrary-norm contractive stochastic approximations with Markovian noise via a skeleton-iterates technique.
However,
\citet{qian2024almost} cover only a limited range of learning rates and their convergence rates are slower than ours.
Tables~\ref{tab:intro-comparison-all}~\&~\ref{tab:intro-comparison-qian} contextualize the strength of our results with the existing literature about almost sure convergence rates.
In short, our key contribution is \textbf{sharper almost sure convergence rates for general stochastic approximations under both (i) and (ii).}

\begin{table}[h]
\centering
\begin{tabular}{@{}p{0.43\linewidth}p{0.34\linewidth}c@{}}
\toprule
& Algorithm type (i.e., $F$)
& Markovian $\{Y_n\}$\\
\midrule
\citet{pelletier1998almost}
& SGD
& \\
\citet{godichon2016estimating}
& SGD
& \\
\citet{godichon2019lp}
& Averaged SGD
& \\
\citet{sebbouh2021almost}
& SGD / Stochastic Heavy Ball 
& \\
\citet{liu2022almost}
& SGD
& \\
\citet{liang2023almost}
& Mirror descent
& \\
\citet{liu2024almost}
& SGD
& \\
\citet{weissmann2024almost}
& SGD
& \\
\midrule
\citet{szepesvari1997asymptotic}
& $Q$-learning
& \\
\citet{tadic2002almost}
& Linear TD
& \cmark \\
\citet{chong1999noise}
& Linear
& \cmark \\
\citet{tadic2004almost}
& Linear
& \cmark \\
\citet{kouritzin2015convergence}
& Linear
& \\
\citet{koval2003law}
& General
& \\
\citet{vidyasagar2023convergence}
& General
& \\
\citet{karandikar2024convergence}
& General
& \\
\citet{qian2024almost}
% \textsuperscript{$\dagger$}
& General
& \cmark \\
\textbf{This work}
% \textsuperscript{$\ddagger$}
& General
& \cmark \\
\bottomrule
\end{tabular}
\caption{\label{tab:intro-comparison-all} Comparison of works that establish almost sure convergence rates for stochastic approximation algorithms. 
By ``General'',
we mean a setting that allows for nonlinear $F$ and arbitrary-norm contractivity.
% This extends Table~1 of \citet{qian2024almost} by adding the SGD almost sure-rate papers discussed in their related-work section; the column layout follows their table.
}
% \begin{flushleft}
% \footnotesize
% Rows above the midrule are SGD-only almost sure-rate works mentioned by \citet{qian2024almost}; they are included here but separated because SGD structure is not available in many RL algorithms. \textsuperscript{$\dagger$}For power-law stepsizes $\alpha_n\asymp n^{-\eta}$, \citet{qian2024almost} cover $\eta>2/3$ and obtain squared-error exponents $\zeta<\frac{3}{2}\eta-1$. \textsuperscript{$\ddagger$}Under Poisson-equation regularity, Theorem~\ref{thm:moreau-power-rates} covers $\eta>1/2$ and gives $\zeta<2\eta-1$, with a polynomial harmonic-stepsize rate.
% \end{flushleft}
\end{table}
\begin{table}[t]
\centering
\begin{tabular}{ccc}
\toprule
{learning rate regime}
  & \citet{qian2024almost}
  & \textbf{This work} \\
\midrule
$1/2<\eta\le 2/3$
  & N/A 
  & a.c. to $o(n^{-(2\eta - 1)})$ \\
$2/3<\eta<1$
  & a.c. to $o(n^{-(\tfrac{3}{2}\eta - 1)})$
  & a.c. to $o(n^{-(2\eta - 1)})$ \\
$\eta=1$
  & $o\qty(\frac{1}{\exp(\zeta_1 \log^{1/(1+\zeta_2)} n)}), \zeta_1 \in (0, 1), 0 < \zeta_2$ 
  & a.c. to $o(n^{-1})$ \\
\bottomrule
\end{tabular}
\caption{
\label{tab:intro-comparison-qian}
Comparison with \citet{qian2024almost} for learning rates of $\alpha_n = \fO(n^{-\eta})$.
By ``a.c.'', we mean ``arbitrarily close''. 
For example, ``a.c. to $o(n^{-\zeta})$'' means that the rate holds for $o(n^{-\zeta + \epsilon})$ for arbitrary $\epsilon > 0$.
To understand the rate of \citet{qian2024almost} for $\eta=1$, consider the extreme unattainable case $\zeta_1 = 1$ and $\zeta_2 = 0$,
then the rate degrades to $o(n^{-1})$.
For any $\zeta_1 < 1$ and $\zeta_2 > 0$, the rate is subpolynomial, i.e., it decays slower than $n^{-\zeta}$ for any $\zeta > 0$ because
$\lim_{n\to\infty}{n^\zeta}/{\exp(\zeta_1 \log^{1/(1+\zeta_2)} n)} = \infty$ for any $\zeta > 0$.
% we mean that for every $\zeta$ in the specified range, the squared error converges to zero at a rate of $o(n^{-\zeta})$ almost surely.
  % $\alpha_n=\alpha(n+1)^{-\eta}$.
  % The comparison is with the skeleton-iterate result of
  % \citet{qian2024almost}.  Their work also proves maximal concentration
  % bounds; the present work assumes Poisson-equation regularity and focuses
  % on sharper pathwise rates.\textsuperscript{$\dagger$}
  }
% {\small\textsuperscript{$\dagger$}$\alpha>0$ is the stepsize constant.
  % $\mu_\xi>0$ is a smoothed contraction coefficient determined by the contraction ratio under $\norm{\cdot}$ and a Moreau smoothing parameter $\xi$; see Theorem~\ref{thm:moreau-power-rates}.
% }
\end{table}
From a technical side, we use Moreau-envelope smoothing to address (ii), 
which is a well established technique from \citet{chen2020finite,chen2024lyapunov}.
The key technical novelty is that we apply a Poisson-equation based correction to the Moreau envelop based Lyapunov function.
This correction allows us to handle the Markovian noise in (i) by obtaining almost-supermartingales and applying the recent advances in almost-supermartingale convergence rates from \citet{liu2024almost},
generating sharper almost sure convergence rates than \citet{qian2024almost}.
Our construction of the Poisson-Moreau drift is, to the best of our knowledge, novel.

\section{Main Results}
\label{sec:main-results}
We consider the stochastic approximation recursion initialized from a deterministic
$\theta_0\in\R[d]$
\begin{align}
\label{eq:sa}
    \textstyle\theta_{n+1} = \theta_n + \alpha_n F(\theta_n, Y_n). \tag{SA}
\end{align}
Let $(\mathsf Y,\mathcal Y)$ be a measurable state space and let $P$ be a Markov kernel on $\mathsf Y$. 
The process $\qty{Y_{n}}_{n\ge0}$ is a Markov chain whose transitions are governed by $P$. 
Let $\qty{\mathcal{F}_n}_{n \ge 0}$ be the filtration defined as the $\sigma$-algebra generated by the history of the Markov chain, i.e., $\mathcal{F}_n = \sigma(Y_0, Y_1, \dots, Y_n)$. By construction, $\theta_n$ is $\mathcal{F}_{n-1}$-measurable and $Y_n$ is $\mathcal{F}_n$-measurable. The process satisfies
% The data (i.e., noise) process $(Y_n)_{n\geq0}$ is adapted to a filtration $(\fF_n)_{n\geq0}$, with $Y_n$ and $\theta_n$ both $\fF_n$-measurable, and satisfies
\begin{align}
\label{eq:bg-markov-property}
    \Pr(Y_{n+1}\in A |\fF_n)=P(Y_n,A)
    \quad\text{a.s. for every } A\in\mathcal Y .
\end{align}
Throughout the paper, $\E_n[\cdot]\doteq \E[\cdot|\fF_n]$.
\begin{assumption}
\label{asp:invariant}
The Markov kernel $P$ admits a unique invariant probability measure $\pi$. 
\end{assumption}
% \sz{I commented out the integrability assumption because I never saw any RL thoery paper check integrability and measurability... This is not Lean.}
% \xl{true... gpt insisted on doing this and said our space is more general lol}
Under Assumption~\ref{asp:invariant}, define the expected update $f(\theta)\doteq \int_{\mathsf Y}F(\theta,y)\pi(dy)$. 
% \begin{align}
% \label{eq:bg-mean-field}
%     \textstyle f(\theta)\doteq \int_{\mathsf Y}F(\theta,y)\pi(dy).
% \end{align}
We consider the setting where the expected update is the residual of some contractive mapping, i.e., we assume $ f(\theta)=T(\theta)-\theta$ for some $T:\R[d]\to\R[d]$ that is contractive under a norm $\norm{\cdot}$.
% \sz{There are two redundancy I don't really follow. Why $\bar f$ instead of $f$? Why $\norm{\cdot}_\star$ instead of $\norm{\cdot}$?}\xl{fixed}
Or formally,
\begin{assumption}
  Let $\textstyle T(\theta)\doteq \theta+ f(\theta).$
\label{asp:ctct}
Then there exists $\kappa\in(0,1)$ and a norm $\norm{\cdot}$ such that
\begin{align}
\label{eq:T-contraction-star}
    \norm{T(\theta)-T(\theta')}
    \leq \kappa\norm{\theta-\theta'},
    \quad \forall \theta,\theta'\in\R[d].
\end{align}
\end{assumption}
This assumption allows the contraction norm $\norm{\cdot}$ to be non-Euclidean, for example $\ell_\infty$. 
We use $\theta_*$ to denote the unique fixed point of $T$, i.e., $ f(\theta_*)=0$.
Such contraction driven stochastic approximations arise in many RL algorithms, such as $Q$-learning and TD.
Notably, it also arises in linear stochastic approximation with Hurwitz matrix, i.e.,
when $ f(\theta) = A\theta + b$ with $A$ being a Hurwitz matrix.
See Lemma 6.1 of \citet{chen2025concentration} for more details on how to verify the contraction condition for such linear stochastic approximation.
This inclusion of linear stochastic approximation with Hurwitz matrix is important because it covers many RL algorithms such as linear TD and gradient TD \citep{sutton2009convergent,sutton2009fast}.
%  and see \sz{cite Zaiwei's paper and maybe some other papers} for how to verify the contraction condition for $Q$-learning and TD.
% which is a common setting for analyzing linear TD and linear $Q$-learning.
% We fix a root $\theta_*$ satisfying $ f(\theta_*)=0$, and define the averaged map
% \begin{align}
% \label{eq:T-def}
    % T(\theta)\doteq \theta+ f(\theta).
% \end{align}
% Our first assumption is that the Markovian bias admits a Poisson-equation representation.
We further assume that $F$ is Lipschitz continuous.
% \sz{I changed the order of assumptions, because the next assumption depends on this one. Check other lemma statements include this assumption correctly.}
\begin{assumption}
\label{asp:lip}
There exists some constant $L_F$ such that, for $\forall \theta,\theta'\in\R[d]$ and $\forall y\in\mathsf Y$,
\begin{align}
    \norm{F(\theta,y)-F(\theta',y)} \leq L_F\norm{\theta-\theta'},\quad \norm{F(\theta_*,y)}
    \leq L_F. \label{eq:F-lip-s}
\end{align} 
% We also assume $\E\norm{\theta_0-\theta_*}_s^2<\infty$.
\end{assumption}

We then make a mild assumption on the Markovian noise, assuming that it admits a Poisson equation representation.
\begin{assumption}
\label{asp:poisson}
There is a measurable function $H:\R[d]\times\mathsf Y\to\R[d]$ such that for $\forall \theta\in\R[d]$ and $\forall y \in \mathsf Y$,
\begin{align}
\label{eq:poisson}
    H_\theta(y)-(PH_\theta)(y)=F(\theta,y)- f(\theta).
\end{align}
Here $H_\theta(y)$ is a shorthand for $H(\theta,y)$ and 
\begin{align}
\label{eq:bg-P-operator}
    \textstyle (PH_\theta)(y)\doteq \int_{\mathsf Y}H_\theta(y') P(y,dy'),
    \quad y\in\mathsf Y .
\end{align}
We further assume there exists some constant $L_H$ such that, for $\forall \theta,\theta'\in\R[d]$ and $\forall y\in\mathsf Y$,
\begin{align}
    \norm{H_\theta(y)-H_{\theta'}(y)}
    \leq L_H\norm{\theta-\theta'}, \quad \norm{H_{\theta_*}(y)}
    \leq L_H. \label{eq:H-lip-s}
\end{align}
% \sz{Further assume $H_\theta$ is Lipschitz. Because you don't want to manually verify the Lipschitzness of $H_\theta$ from the Poisson equation. This is doable but complicated for general state space.}
\end{assumption}
% \sz{Discuss why this assumption is mild. And it's automatic for finite chains. See the discussion in Jiuqi's JMLR paper. Follow the structure there.}
% \sz{There is no need to introduce this new norm. Just assume Lipschitz and growth conditions under the same norm $\norm{\cdot}$ that is used for the contraction. Then use norm equivalence in proof. And linear growth can be derived from Lipschitzness so no need to assume it separately. Prove it when we need it. }
\begin{remark}
% To handle the Markovian dependence within the noise sequence $F(\theta, Y_n) - f(\theta)$, we impose 
Assumption~\ref{asp:poisson} is standard in the stochastic approximation literature (see, e.g., \citet{benveniste1990MP,kushner2003stochastic,borkar2025ode}). 
This assumption is relatively mild. In particular, for any finite state space $\mathcal{Y}$ where the Markov chain $\qty{Y_n}$ is irreducible and aperiodic, the assumption is automatically satisfied for an arbitrary function $F$. Under such finite-state conditions, the solution $H_\theta$ can be explicitly constructed via the fundamental matrix (see, e.g., Section 8.2.3 of \citet{puterman2014markov}).
% , and our derivation in Appendix~\ref{proof:vtrace-assumptions}).
For generic state spaces, various structural conditions suffice to establish this property. For instance, the assumption holds if $\qty{Y_n}$ is a positive Harris chain\footnote{The formal definitions of positive and Harris chains are provided on pages 204 and 235 of \citet{meyn2012markov}.} that satisfies the Lyapunov drift condition (V4)\footnote{See page 386 of \citet{meyn2012markov} for the precise definition of the (V4) condition.} with a constant function $V(y) \equiv 1$\footnote{We note that finite irreducible and aperiodic chains trivially satisfy this uniform drift condition.}. Under these conditions, Theorem 2.3 of \citet{glynn1996liapounov} guarantees the assumption through the boundedness of the corresponding fundamental kernel. Elaborating on the complete measure-theoretic prerequisites for these bounds would deviate from the primary focus of this paper. Consequently, we directly state Assumption~\ref{asp:poisson} to maintain focus on our main convergence results.
\end{remark}
We now state our choice of learning rates and the main convergence result.
\begin{assumption}
\label{asp:lrn_rt}
% We assume learning rates are in the form 
$\alpha_n=\frac{\alpha}{(n+1)^\eta}$, where $\alpha>0$ and $\frac{1}{2}<\eta\leq1$.
\end{assumption}
% \sz{Present the below result as the single theorem in the main result section right after listing all assumptions.}
% We now state the resulting almost sure rates. 
\begin{theorem}
\label{thm:moreau-power-rates}
Let Assumptions~\ref{asp:invariant} - \ref{asp:lrn_rt} hold.
The iterates $\qty{\theta_n}$ generated by~\eqref{eq:sa} satisfy
\begin{align}
    \textstyle\lim_{n\to\infty}(n+1)^\zeta\norm{\theta_n-\theta_*}^2 = 0 \qq{a.s.}
\end{align}
Here, if $\frac{1}{2}<\eta<1$, the above holds for any $\zeta \in (0, 2\eta - 1)$; if $\eta=1$, the above holds for any $\zeta \in (0, \min\{1, 2(1-\kappa)\alpha\})$.
% \begin{enumerate}
%   \item If $\frac{1}{2}<\eta<1$, then for every $\zeta$ satisfying $0\leq\zeta<2\eta-1$, we have
%   \begin{align}
%     \textstyle\lim_{n\to\infty}(n+1)^\zeta\norm{\theta_n-\theta_*}^2
%     = 0 \qq{a.s.,}
%     % \quad\text{a.s.}
%   \end{align}
%   \item If $\eta=1$, then for every $\zeta$ satisfying $0\leq\zeta<\min\{1,2(1-\kappa)\alpha\}$, we have
%   \begin{align}
%     \textstyle\lim_{n\to\infty}(n+1)^\zeta\norm{\theta_n-\theta_*}^2
%     = 0 \qq{a.s.}
%   \end{align}
% \end{enumerate}
\end{theorem}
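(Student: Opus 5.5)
We will build a Lyapunov function from the generalized Moreau envelope of \citet{chen2020finite,chen2024lyapunov}. Fix a smoothing parameter $\xi>0$ and set $M(x)\doteq\min_{u\in\R[d]}\{\tfrac12\norm{u}^2+\tfrac{1}{2\xi}\norm{x-u}_2^2\}$. By norm equivalence, $M$ is $L_M$-smooth with $L_M$ of order $1/\xi$. It satisfies $\ell_\xi\norm{x}^2\le M(x)\le u_\xi\norm{x}^2$, where $u_\xi/\ell_\xi\to1$ as $\xi\to0$. It also gives a negative drift along the contraction: $\inner{\nabla M(\theta-\theta_*)}{T(\theta)-\theta}\le -2(1-\kappa_\xi)M(\theta-\theta_*)$, where $\kappa_\xi\to\kappa$ as $\xi\to0$. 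This is the standard computation in \citet{chen2020finite}. Write $x_n=\theta_n-\theta_*$ and expand by smoothness:
\begin{align}
M(x_{n+1})\le M(x_n)+\alpha_n\inner{\nabla M(x_n)}{f(\theta_n)}+\alpha_n\inner{\nabla M(x_n)}{F(\theta_n,Y_n)-f(\theta_n)}+\tfrac{L_M}{2}\alpha_n^2\norm{F(\theta_n,Y_n)}_2^2 .
\end{align}
Assumption~\ref{asp:lip} bounds the last term by $C\alpha_n^2(1+M(x_n))$. The second term is at most $-2(1-\kappa_\xi)\alpha_n M(x_n)$.

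The Markovian cross term is handled by the Poisson correction. By Assumption~\ref{asp:poisson}, $F(\theta_n,Y_n)-f(\theta_n)=H_{\theta_n}(Y_n)-PH_{\theta_n}(Y_n)$. We split this as
\begin{align}
\bigl[H_{\theta_n}(Y_{n+1})-PH_{\theta_n}(Y_n)\bigr]+\bigl[H_{\theta_n}(Y_n)-H_{\theta_{n}}(Y_{n+1})\bigr].
\end{align}
The first bracket is a martingale difference with respect to $\fF_n$, since $\theta_n$ is $\fF_{n-1}$-measurable. For the second bracket we telescope: write $g_n(y)\doteq \alpha_n\inner{\nabla M(x_n)}{H_{\theta_n}(y)}$, so the bracket contributes $g_n(Y_n)-g_{n+1}(Y_{n+1})$ plus the remainder $g_{n+1}(Y_{n+1})-g_n(Y_{n+1})$. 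We fold the shift into the drift by defining
\begin{align}
W_n\doteq M(x_n)+\alpha_{n}\inner{\nabla M(x_{n})}{PH_{\theta_{n}}(Y_{n-1})}.
\end{align}
This is the Poisson-Moreau drift; the exact indexing will be adjusted so that $W_n$ is $\fF_{n-1}$-measurable. The correction satisfies $|W_n-M(x_n)|\le C\alpha_n(1+M(x_n))$, so $W_n$ and $M(x_n)$ are comparable for large $n$. The remainders involve $\alpha_{n}-\alpha_{n+1}=\fO(\alpha_n^2)$, $\norm{\nabla M(x_{n+1})-\nabla M(x_n)}\le L_M\alpha_n\norm{F}$, and $\norm{H_{\theta_{n+1}}-H_{\theta_n}}\le L_H\alpha_n\norm{F}$. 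Each is bounded by $C\alpha_n^2(1+M(x_n))$. Taking conditional expectations should then give an almost-supermartingale inequality
\begin{align}
\E_{n}[W_{n+1}]\le (1-2(1-\kappa_\xi)\alpha_n+C\alpha_n^2)W_n+C\alpha_n^2 .
\end{align}
The martingale term vanishes in expectation because of the measurability shift. Its second moment is $\fO(\alpha_n^2(1+M))$, which is already included.

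To finish, we apply the almost-supermartingale rate theorem of \citet{liu2024almost}. Multiply by $(n+1)^\zeta$ and use $(n+2)^\zeta/(n+1)^\zeta\le 1+\zeta/(n+1)+\fO(n^{-2})$.

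For $\eta<1$, $\zeta/(n+1)=o(\alpha_n)$, so the contraction factor still dominates. The forcing term $(n+1)^\zeta\alpha_n^2\sim n^{\zeta-2\eta}$ is summable exactly when $\zeta<2\eta-1$. Robbins--Siegmund then gives that $(n+1)^\zeta W_n$ converges a.s. Since $\sum\alpha_n=\infty$, the limit must be $0$; alternatively, apply the argument at a slightly larger $\zeta'$ to get $o$.

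For $\eta=1$ we need $\zeta<2(1-\kappa_\xi)\alpha$ so that the drift still dominates, and $\zeta<1$ for summability of $n^{\zeta-2}$. Letting $\xi\to0$ makes $\kappa_\xi\to\kappa$, which covers every $\zeta<\min\{1,2(1-\kappa)\alpha\}$. Converting $W_n$ back to $\norm{x_n}^2$ costs only constants.

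The main obstacle is bookkeeping the Poisson correction with the smoothed gradient. Specifically, we must keep every remainder at order $\alpha_n^2(1+M(x_n))$ and not $\alpha_n^{3/2}$, and we must get measurability of $\theta_n$ versus $Y_n$ right so that the martingale part is exact. The sharp $2\eta-1$ exponent depends on this. A second concern is the harmonic case: we must make sure the smoothing loses only $\kappa_\xi-\kappa$ in the drift constant, so that the threshold $2(1-\kappa)\alpha$ is attained as a supremum over $\xi$.
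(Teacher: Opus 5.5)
Your architecture matches the paper's: Moreau smoothing, an order-$\alpha_n$ Poisson correction folded into the Lyapunov function, and a weighted Robbins--Siegmund argument with $q_n=(n+1)^\zeta$. Up to an additive $K\alpha_n^2$, your $W_n$ is exactly $\E[V_n^\xi\mid\fF_{n-1}]$, where $V_n^\xi=M(x_n)+\alpha_n\inner{\nabla M(x_n)}{H_{\theta_n}(Y_n)}+K\alpha_n^2$ is the paper's drift. However, there is a gap at the step the paper singles out as essential. You never show that your drift is nonnegative, which Robbins--Siegmund (like the rate theorem you cite) requires. The correction is sign-indefinite. Because $H_{\theta_*}\neq0$ in general, it is of size $\alpha_n\norm{x_n}$ near the origin, while $M(x_n)\asymp\norm{x_n}^2$. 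Hence $W_n<0$ is possible whenever $\norm{x_n}\lesssim\alpha_n$, and your comparability bound $\abs{W_n-M(x_n)}\le C\alpha_n(1+M(x_n))$ only yields $W_n\ge -C\alpha_n$. The missing idea is the extra term $K\alpha_n^2$ combined with Young's inequality. Once $C\alpha_n\le 1/8$, you have $C\alpha_n\norm{x}_{m,\xi}(\norm{x}_{m,\xi}+1)\le\frac14\norm{x}_{m,\xi}^2+2C^2\alpha_n^2$. So for $K\ge 2C^2$ the shifted drift obeys $V_n^\xi\ge\frac14\norm{x_n}_{m,\xi}^2\ge0$. The new term is harmless in the recursion. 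It contributes $K(\alpha_{n+1}^2-\alpha_n^2)\le0$, plus $\mu_\xi K\alpha_n^3$ when you trade $-\mu_\xi\alpha_nM(x_n)$ for $-\mu_\xi\alpha_nV_n^\xi$. This coercivity is also what absorbs the remainder $r_n(\norm{x_n}_{m,\xi}^2+1)$ into $r_n(4V_n^\xi+1)$. It is also what makes your final conversion back to $\norm{x_n}^2$ cost only constants. With your bound, that conversion costs an additive $(n+1)^\zeta\alpha_n$, which vanishes since $\zeta<\eta$ but must be checked.

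Second, your conditioning is inconsistent. If $W_n$ is $\fF_{n-1}$-measurable, then $W_{n+1}$ is $\fF_n$-measurable and $\E_n[W_{n+1}]=W_{n+1}$. Your displayed inequality would then be a pathwise statement, and it is false because the martingale part of the noise has not been averaged out. With your $W_n$ you must condition on $\fF_{n-1}$ and run Robbins--Siegmund along $(\fF_{n-1})_n$. This works, using that $PH_\theta$ inherits the Lipschitz constant $L_H$ from $H_\theta$. The simpler option is to keep the $\fF_n$-measurable correction $g_n(Y_n)$ that your own telescoping produces, which is the paper's choice. Then $\E_n$ is the right conditioning, and the martingale part drops out exactly, so no second-moment bound on it is needed.
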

The proof is given in Section~\ref{proof:moreau-power-rates}.

% \sz{Now start a new section to present the key proof novelty, i.e., the Poisson-Moreau shifted drift. After this section, we can start the applications.}\xl{ok}

\section{Poisson-Moreau Drift and Proof of Theorem~\ref{thm:moreau-power-rates}}
\label{sec:main-proof}

In this section, we illustrate the key technical novelty of this paper.
Write $e_n \doteq \theta_n-\theta_*$.
% \begin{align}
% \label{eq:error-def}
%     e_n \doteq \theta_n-\theta_* .
% \end{align}
A common approach to analyze the convergence of $e_n$ is to construct a drift function $V:\R[d]\to\R_+$ and establish the recursion between $V(e_{n+1})$ and $V(e_n)$.
If the norm in Assumption~\ref{asp:ctct} is Euclidean, then the natural choice is $V(x)=\frac12\norm{x}^2$, see, e.g., \citet{bhandari2018finite,srikant2019finite}.
To handle a possibly non-Euclidean norm, 
Moreau-envelope smoothing is a standard technique.
To our knowledge, the first to use Moreau envelopes in the context of stochastic approximation is \citet{chen2020finite}. 
And it has since been widely used in the literature on arbitrary norm contractive stochastic approximation,
see, e.g., \citet{chen2021lyapunov,zhang2022global,chen2024lyapunov,qian2024almost,liu2025linearq,chen2025concentration}.
Below we provide a brief background on this technique.
We do note that these are not our contributions and we provide them here only for completeness.
That being said,
we will shortly articulate why existing developments on Moreau envelopes are not sufficient for our purposes and how we need to further develop the technique by adding a Poisson-equation based correction to the Moreau envelope.

% we use the Moreau envelope to construct a smooth drift function that is equivalent to the original energy $x\mapsto \frac12\norm{x}^2$ up to norm equivalence constants.

% is an almost-supermartingale, and then apply the almost-supermartingale convergence results from \citet{liu2024almost} to obtain almost sure rates.

% which is the construction of a Poisson-Moreau drift function that allows us to handle both the non-Euclidean contraction and the Markovian noise, and then we use this drift function to prove Theorem~\ref{thm:moreau-power-rates}.

% This section proves Theorem~\ref{thm:moreau-power-rates}.  The proof is organized around the Poisson-Moreau technique: smooth the possibly nonsmooth contraction energy by a Moreau envelope, then add a Poisson-equation correction to cancel the leading Markovian bias.  

% There are two difficulties.  First, the contraction in Assumption~\ref{asp:ctct} is with respect to a possibly non-Euclidean norm, so the natural energy $x\mapsto \frac12\norm{x}^2$ may not be differentiable.  Second, under Markovian sampling the centered update $F(\theta_n,Y_n)-f(\theta_n)$ is not a martingale difference with respect to $\fF_n$.  The Moreau envelope addresses the first issue, while the Poisson correction addresses the second.

\subsection{Background on Moreau Envelopes}
% \sz{Simply Make it $\norm{}_2$, after all our parameter is always finite dimensional.}\xl{d!}
% A reader familiar with Moreau envelopes can skip this subsection.  
% The presentation here follows that of \citet{chen2021lyapunov}, which is the most comprehensive reference on Moreau envelopes in the context of stochastic approximation to the best of our knowledge.  We also refer readers to \citet{rockafellar1997convex} for a more general treatment of Moreau envelopes in convex analysis.
Denote the Euclidean norm as $\norm{\cdot}_2$. 
% In finite dimension we know $x\mapsto \frac12\norm{x}_2^2$ is differentiable and has $L$-Lipschitz gradient with respect to $\norm{\cdot}_2$, where $L = \sqrt{2}$, we use it as an auxiliary norm, and 
Thanks to norm equivalence in finite dimensions, we can choose constants $\ell,u>0$ such that
\begin{align}
\label{eq:bg-cs-equivalence}
    \textstyle\ell \norm{x}_2\leq \norm{x}\leq u\norm{x}_2,
    \quad x\in\R[d].
\end{align}
For $\xi>0$, define the Moreau envelope of $x\mapsto \frac12\norm{x}^2$ by
\begin{align}
\label{eq:bg-moreau-def}
    \textstyle M_\xi(x) \doteq \inf_{u\in\R[d]}\qty{
        \frac12\norm{u}^2+\frac{1}{2\xi}\norm{x-u}_2^2}.
\end{align}
The following are standard facts that we will use, 
with $\inner{\cdot}{\cdot}$ denoting the Euclidean inner product.
%  are the only properties of the envelope used in the proof.  
%  Here $\inner{\cdot}{\cdot}$ denotes the Euclidean pairing between gradients and vectors.

\begin{lemma}[Proposition A.1 and Section A.2 of \citet{chen2021lyapunov}]
\label{lem:moreau-facts}
For every $\xi>0$, the function $M_\xi$ is continuously differentiable and $1/\xi$-smooth with respect to $\norm{\cdot}_2$:
\begin{align}
\label{eq:moreau-smoothness}
    \textstyle \norm{\nabla M_\xi(x)-\nabla M_\xi(y)}_2
    \leq \frac{1}{\xi}\norm{x-y}_2,
    \quad x,y\in\R[d].
\end{align}
Moreover, there exists a norm $\norm{\cdot}_{m,\xi}$ such that $M_\xi(x)=\frac12\norm{x}_{m,\xi}^2$.
With $\ell_\xi\doteq \sqrt{1+\xi\ell^2}, u_\xi\doteq \sqrt{1+\xi u^2}$,
we have
\begin{align}
\label{eq:m-star-equivalence}
    \ell_\xi\norm{x}_{m,\xi}
    \leq \norm{x}
    \leq u_\xi\norm{x}_{m,\xi},
    \quad x\in\R[d].
\end{align}
Finally, for all $x,y\in\R[d]$,
\begin{align}
\label{eq:moreau-gradient-cauchy}
    \abs{\inner{\nabla M_\xi(x)}{y}} \leq \norm{x}_{m,\xi}\norm{y}_{m,\xi},
    \quad
    \inner{\nabla M_\xi(x)}{x}\geq \norm{x}_{m,\xi}^2 .
\end{align}
\end{lemma}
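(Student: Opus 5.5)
The plan is to view $M_\xi$ as the infimal convolution of $g(u)=\frac12\norm{u}^2$ and $h(v)=\frac{1}{2\xi}\norm{v}_2^2$. Then I would derive the four claims in turn (smoothness, the norm representation, norm equivalence, the gradient inequalities), each from a standard convex-analytic fact.

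\textbf{Smoothness.} Since $g$ is convex, the objective $u\mapsto g(u)+\frac{1}{2\xi}\norm{x-u}_2^2$ is $\frac1\xi$-strongly convex. It therefore has a unique minimizer $p(x)=\mathrm{prox}_{\xi g}(x)$. The standard Moreau-envelope theory (e.g.\ \citet{rockafellar1997convex}) gives $M_\xi\in C^1$ with
\begin{align}
\nabla M_\xi(x)=\tfrac{1}{\xi}\qty(x-p(x)).
\end{align}
The map $x\mapsto x-p(x)$ is the resolvent of the conjugate and is firmly nonexpansive in $\norm{\cdot}_2$. Hence $\nabla M_\xi$ is $\frac1\xi$-Lipschitz, which is \eqref{eq:moreau-smoothness}.

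\textbf{Norm representation.} Next I would check four properties of $M_\xi$.
\begin{itemize}
\item It is convex, being an infimal convolution of convex functions.
\item It is even.
\item It is positively $2$-homogeneous: substituting $u=tv$ gives $M_\xi(tx)=t^2M_\xi(x)$.
\item It is positive definite, which follows from the lower bound proved below.
\end{itemize}
Define $\norm{x}_{m,\xi}\doteq\sqrt{2M_\xi(x)}$. This function is absolutely homogeneous and positive definite. It is also the gauge of the convex, symmetric, compact, absorbing set $\{M_\xi\le \frac12\}$, so it is a norm.

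Verifying the triangle inequality is the only slightly delicate step. I would handle it via the gauge argument: the sublevel set is convex because $M_\xi$ is convex, and $2$-homogeneity turns that sublevel set into the unit ball of $\norm{\cdot}_{m,\xi}$.

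\textbf{Norm equivalence.} For the upper bound on $\norm{x}_{m,\xi}$, restrict the infimum to $u=tx$ and use $\norm{x}_2\le \norm{x}/\ell$:
\begin{align}
M_\xi(x)\le \min_t\qty{\tfrac{t^2}{2}+\tfrac{(1-t)^2}{2\xi\ell^2}}\norm{x}^2=\tfrac{1}{2(1+\xi\ell^2)}\norm{x}^2 .
\end{align}
This gives $\ell_\xi\norm{x}_{m,\xi}\le\norm{x}$.

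For the lower bound, use $\norm{x-u}_2\ge \norm{x-u}/u$ and set $a=\norm{u}$, $b=\norm{x-u}$. The triangle inequality gives $a+b\ge\norm{x}$. Minimizing $\frac12a^2+\frac{1}{2\xi u^2}b^2$ under this constraint yields $M_\xi(x)\ge \frac{1}{2(1+\xi u^2)}\norm{x}^2$, i.e.\ $\norm{x}\le u_\xi\norm{x}_{m,\xi}$. This also supplies the positive definiteness used above.

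\textbf{Gradient inequalities.} Euler's identity for the differentiable $2$-homogeneous $M_\xi$ gives
\begin{align}
\inner{\nabla M_\xi(x)}{x}=2M_\xi(x)=\norm{x}_{m,\xi}^2,
\end{align}
so the second inequality in \eqref{eq:moreau-gradient-cauchy} in fact holds with equality.

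For $x\neq0$, the chain rule gives $\nabla M_\xi(x)=\norm{x}_{m,\xi}\,\nabla\norm{\cdot}_{m,\xi}(x)$. The gradient of a norm at a point of differentiability has dual norm at most one, so $\abs{\inner{\nabla\norm{\cdot}_{m,\xi}(x)}{y}}\le\norm{y}_{m,\xi}$. This yields the first inequality; the case $x=0$ is trivial since $\nabla M_\xi(0)=0$.
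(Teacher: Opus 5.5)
Your argument is correct and follows the standard route behind the cited Proposition A.1 of \citet{chen2021lyapunov}; the paper itself gives no proof and simply imports the result. That route is: prox/infimal-convolution theory for the $1/\xi$-smoothness, convexity plus $2$-homogeneity to make $\norm{x}_{m,\xi}=\sqrt{2M_\xi(x)}$ a norm, the test point $u=tx$ and the triangle inequality for the two equivalence constants, and the chain rule with the dual-norm bound for the gradient inequalities. The differences are cosmetic: you get smoothness from firm nonexpansiveness of $I-\mathrm{prox}_{\xi g}$, which relies on the smoothing norm being Euclidean as it is here, and you observe via Euler's identity that the second inequality in \eqref{eq:moreau-gradient-cauchy} actually holds with equality.
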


We choose $\xi>0$ small enough that the contraction survives the smoothing step:
\begin{align}
\label{eq:xi-contraction-choice}
    \textstyle\kappa\frac{u_\xi}{\ell_\xi}<1,
\end{align}
and define the resulting smooth drift rate by
\begin{align}
\label{eq:mu-xi-def}
    \textstyle\mu_\xi\doteq 2\left(1-\kappa\frac{u_\xi}{\ell_\xi}\right)>0.
\end{align}
Such a choice is possible because $u_\xi/\ell_\xi\to1$ as $\xi\downarrow0$.

\begin{lemma}
\label{lem:moreau-contract-drift}
Under Assumption~\ref{asp:ctct} and the choice \eqref{eq:xi-contraction-choice}, for every $\theta\in\R[d]$,
\begin{align}
\label{eq:moreau-mean-drift}
  \inner{\nabla M_\xi(\theta-\theta_*)}{ f(\theta)}
  \leq -\mu_\xi M_\xi(\theta-\theta_*).
\end{align}
\end{lemma}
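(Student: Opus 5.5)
Write $x\doteq\theta-\theta_*$. Since $f(\theta_*)=0$ we have $T(\theta_*)=\theta_*$, so
\begin{align}
  f(\theta)=T(\theta)-\theta=\bigl(T(\theta)-T(\theta_*)\bigr)-x .
\end{align}
The plan is to split the inner product into a term involving the contraction increment $T(\theta)-T(\theta_*)$ and the term $-\inner{\nabla M_\xi(x)}{x}$. The second term is controlled by the lower bound in \eqref{eq:moreau-gradient-cauchy}, and the first by the Cauchy--Schwarz-type upper bound in \eqref{eq:moreau-gradient-cauchy} combined with the contraction and the norm equivalence \eqref{eq:m-star-equivalence}.

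\textbf{Step 1 (the $-x$ term).} Applying the second inequality in \eqref{eq:moreau-gradient-cauchy} gives
\begin{align}
  -\inner{\nabla M_\xi(x)}{x}\leq -\norm{x}_{m,\xi}^2 .
\end{align}

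\textbf{Step 2 (the contraction term).} Write $z\doteq T(\theta)-T(\theta_*)$. The first inequality in \eqref{eq:moreau-gradient-cauchy} gives $\inner{\nabla M_\xi(x)}{z}\leq \norm{x}_{m,\xi}\norm{z}_{m,\xi}$. We then pass to the contraction norm and back:
\begin{align}
  \norm{z}_{m,\xi}\leq \tfrac{1}{\ell_\xi}\norm{z}\leq \tfrac{\kappa}{\ell_\xi}\norm{x}\leq \kappa\tfrac{u_\xi}{\ell_\xi}\norm{x}_{m,\xi}.
\end{align}
The first inequality is the left side of \eqref{eq:m-star-equivalence}, the second is Assumption~\ref{asp:ctct}, and the third is the right side of \eqref{eq:m-star-equivalence}.

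\textbf{Step 3 (combine).} Adding the two bounds yields
\begin{align}
  \inner{\nabla M_\xi(x)}{f(\theta)}\leq -\Bigl(1-\kappa\tfrac{u_\xi}{\ell_\xi}\Bigr)\norm{x}_{m,\xi}^2 .
\end{align}
Since $M_\xi(x)=\tfrac12\norm{x}_{m,\xi}^2$ by Lemma~\ref{lem:moreau-facts}, we have $\norm{x}_{m,\xi}^2=2M_\xi(x)$. The right-hand side is therefore exactly $-2\bigl(1-\kappa\tfrac{u_\xi}{\ell_\xi}\bigr)M_\xi(x)=-\mu_\xi M_\xi(x)$, using the definition \eqref{eq:mu-xi-def}. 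The choice \eqref{eq:xi-contraction-choice} guarantees $\mu_\xi>0$.

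There is no real obstacle here. The only point needing care is the direction of the norm-equivalence constants in Step 2: we divide by $\ell_\xi$ when going from $\norm{\cdot}$ to $\norm{\cdot}_{m,\xi}$, and multiply by $u_\xi$ when going back. This ordering is precisely what produces the ratio $u_\xi/\ell_\xi$ appearing in \eqref{eq:xi-contraction-choice}.
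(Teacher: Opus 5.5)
Your proof is correct and follows essentially the same route as the paper's. Both split $f(\theta)=(T(\theta)-\theta_*)-(\theta-\theta_*)$, bound the contraction pairing by $\kappa\frac{u_\xi}{\ell_\xi}\norm{x}_{m,\xi}^2$ via \eqref{eq:moreau-gradient-cauchy}, \eqref{eq:m-star-equivalence} and Assumption~\ref{asp:ctct}, bound the other pairing by $\inner{\nabla M_\xi(x)}{x}\geq\norm{x}_{m,\xi}^2$, and convert using $M_\xi=\frac12\norm{\cdot}_{m,\xi}^2$.
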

The proof is in Appendix~\ref{proof:moreau-contract-drift}.  
This lemma is how the Moreau envelope is typically used: it converts the contraction of $T$ in $\norm{\cdot}$ into a differentiable one-step drift for the smooth energy $M_\xi$.

\subsection{Limitations of Prior Moreau-Based Analyses}
\label{sec:moreau-limitations}
In prior works, the drift function is typically taken to be $V(x) = M_\xi(x)$ directly \citep{chen2020finite, chen2025concentration}.
This is sufficient when $\qty{Y_n}$ are i.i.d.
But when $\qty{Y_n}$ is a Markov chain,
additional terms appear and require additional control.
There are two typical approaches to control these terms resulting from Markovian noise.
The first is based on \citet{srikant2019finite},
which constructs an auxiliary stationary Markov chain to compare with, 
using the mixing property of the Markov chain to show that the bias terms are small on average,
see, e.g., \citet{chen2021lyapunov,zhang2022global,chen2024lyapunov}.
Essentially, this approach will establish the recursion between $\E\qty[M_\xi(e_{n+1})]$ and $\E\qty[M_\xi(e_n)]$.
This is sufficient for obtaining convergence rates in expectation. 
But it does not give pathwise control of the bias terms and thus is not sufficient for obtaining almost sure rates.
The second approach is the skeleton iterates approach from \citet{qian2024almost}.
Instead of working with the original timescale $\qty{1, 2, \dots}$,
\citet{qian2024almost} carefully choose a subsequence of time points $\qty{n_k}$ and investigate $\qty{\theta_{n_k}}$ instead of $\qty{\theta_n}$.
This allows to construct the recursion between $\E_{n_k}\qty[M_\xi(e_{n_{k+1}})]$ and $M_\xi(e_{n_k})$.
With this recursion,
\citet{qian2024almost} show that $\qty{M_\xi(e_{n_k})}$ is almost a supermartingale and apply the almost supermartingale convergence results from \citet{liu2024almost} to obtain almost sure convergence rates for $\qty{M_\xi(e_{n_k})}$, 
which is then propagated back to $\qty{M_\xi(e_n)}$.
The foundation of this approach is recently formally verified by \citet{zhang2025towards}.
However, as summarized in Table~\ref{tab:intro-comparison-qian},
the price of the skeleton iterates approach is that (1) it cannot cover the case $\eta\in(1/2,2/3]$ and (2) for $\eta\in(2/3,1]$, the rate is much slower than the optimal.
In this paper, we provide a third approach that applies a Poisson-equation based correction to the Moreau drift directly.

\subsection{Poisson Correction}
The key technical novelty of this work is the following drift
% The Markovian bias is removed by adding an order-$\alpha_n$ correction to the Moreau energy.  Let
% \begin{align}
% \label{eq:r-def}
%     r_n\doteq \alpha_n^2+\abs{\alpha_{n+1}-\alpha_n}
% \end{align}
% and, for a constant $K>0$ to be chosen large enough, define
\begin{align}
\label{eq:moreau-V-def}
    V_n^\xi
    \doteq M_\xi(e_n)
    +\alpha_n\inner{\nabla M_\xi(e_n)}{H_{\theta_n}(Y_n)}
    +K\alpha_n^2.
\end{align}
Here $K$ is a positive constant to be chosen large enough.
The middle term is the Poisson correction.  
It has the same order as one update step.
So it does not change the asymptotic size of the drift but cancels a leading Markovian term in the conditional drift.
The last term is a technical correction to control the residual terms that appear after the Poisson correction.
We then show that $V_n^\xi$ is almost a nonnegative supermartingale.

% The following estimate is the one-time-scale conditional version of the one-step Moreau--Poisson expansion used in the proof of Lemma~3(a) of \citet{chandak2025finite}, before the Poisson difference is
% rewritten as a telescoping term.
\begin{lemma}
\label{lem:conditional-chandak-moreau-poisson}
Under Assumptions~\ref{asp:invariant}, \ref{asp:lip}, and~\ref{asp:poisson}, fix $\xi>0$, 
there exists some constant $C_\xi<\infty$ such that
\begin{align}
\E_n \qty[M_\xi(e_{n+1})] \leq& M_\xi(e_n)
+\alpha_n\inner{\nabla M_\xi(e_n)}{f(\theta_n)}\\
& +\alpha_n \E_n\qty[
\inner{\nabla M_\xi(e_n)}{H_{\theta_n}(Y_n)}
- \inner{\nabla M_\xi(e_{n+1})}{H_{\theta_{n+1}}(Y_{n+1})}] \\
& + C_\xi\alpha_n^2\bigl(\norm{e_n}_{m,\xi}^2+1\bigr).
\end{align}
\end{lemma}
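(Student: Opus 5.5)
We start from the $1/\xi$-smoothness of $M_\xi$ in Lemma~\ref{lem:moreau-facts}, which gives the descent inequality
\begin{align}
M_\xi(e_{n+1}) \leq M_\xi(e_n) + \alpha_n\inner{\nabla M_\xi(e_n)}{F(\theta_n,Y_n)} + \tfrac{\alpha_n^2}{2\xi}\norm{F(\theta_n,Y_n)}_2^2 .
\end{align}
The last term is handled by Assumption~\ref{asp:lip}. We have $\norm{F(\theta_n,Y_n)}\le L_F\norm{e_n}+L_F$. Norm equivalence \eqref{eq:bg-cs-equivalence} and \eqref{eq:m-star-equivalence} then bound it by a constant times $\alpha_n^2(\norm{e_n}_{m,\xi}^2+1)$. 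Next we split $F(\theta_n,Y_n)=f(\theta_n)+\bigl(F(\theta_n,Y_n)-f(\theta_n)\bigr)$. The first piece produces the term $\alpha_n\inner{\nabla M_\xi(e_n)}{f(\theta_n)}$ of the statement. For the second piece we use the Poisson equation of Assumption~\ref{asp:poisson} to write $F(\theta_n,Y_n)-f(\theta_n)=H_{\theta_n}(Y_n)-(PH_{\theta_n})(Y_n)$. By the Markov property \eqref{eq:bg-markov-property}, and because $\theta_n$ is $\fF_{n-1}$-measurable, we have $(PH_{\theta_n})(Y_n)=\E_n[H_{\theta_n}(Y_{n+1})]$.

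After taking $\E_n$, the noise term therefore equals $\alpha_n\E_n[\inner{\nabla M_\xi(e_n)}{H_{\theta_n}(Y_n)}-\inner{\nabla M_\xi(e_n)}{H_{\theta_n}(Y_{n+1})}]$. We add and subtract the target quantity $\inner{\nabla M_\xi(e_{n+1})}{H_{\theta_{n+1}}(Y_{n+1})}$. What remains is
\begin{align}
\alpha_n\E_n\bigl[\inner{\nabla M_\xi(e_{n+1})}{H_{\theta_{n+1}}(Y_{n+1})}-\inner{\nabla M_\xi(e_n)}{H_{\theta_n}(Y_{n+1})}\bigr],
\end{align}
and we must show it is $O(\alpha_n^2(\norm{e_n}_{m,\xi}^2+1))$. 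We split it into two parts:
\begin{align}
\inner{\nabla M_\xi(e_{n+1})-\nabla M_\xi(e_n)}{H_{\theta_{n+1}}(Y_{n+1})} + \inner{\nabla M_\xi(e_n)}{H_{\theta_{n+1}}(Y_{n+1})-H_{\theta_n}(Y_{n+1})}.
\end{align}
For the first part, smoothness \eqref{eq:moreau-smoothness} gives $\norm{\nabla M_\xi(e_{n+1})-\nabla M_\xi(e_n)}_2\le \xi^{-1}\alpha_n\norm{F(\theta_n,Y_n)}_2$. Lipschitzness of $H$ gives $\norm{H_{\theta_{n+1}}(Y_{n+1})}\le L_H\norm{e_{n+1}}+L_H\le L_H(1+\alpha_nL_F)\norm{e_n}+L_H(1+\alpha_n L_F)$. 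By Cauchy--Schwarz the product is $O(\alpha_n(\norm{e_n}+1)^2)$.

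For the second part we use \eqref{eq:moreau-gradient-cauchy} together with the Lipschitz bound on $H$. This gives
\begin{align}
\norm{e_n}_{m,\xi}\norm{H_{\theta_{n+1}}-H_{\theta_n}}_{m,\xi}\lesssim \norm{e_n}_{m,\xi}L_H\alpha_n\norm{F(\theta_n,Y_n)},
\end{align}
which is again $O(\alpha_n(\norm{e_n}_{m,\xi}^2+1))$. Multiplying by the outer $\alpha_n$ puts both parts in the $C_\xi\alpha_n^2(\norm{e_n}_{m,\xi}^2+1)$ term. Here we use that $\alpha_n\le\alpha$ is bounded, and that all norms are converted to $\norm{\cdot}_{m,\xi}$ via the equivalence constants $\ell,u,\ell_\xi,u_\xi$.

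The main obstacle is the bookkeeping of measurability in the add-and-subtract step. The Poisson identity is used at the frozen parameter $\theta_n$, but the statement's telescoping term is evaluated at $\theta_{n+1}$. The correction must therefore be absorbed through Lipschitzness of both $\nabla M_\xi$ and $H$. This works because $\theta_{n+1}$ is $\fF_n$-measurable, so no extra conditional expectations arise. The remaining point to check is that every constant depends only on $\xi,L_F,L_H,\ell,u,\alpha$, which yields a finite $C_\xi$.
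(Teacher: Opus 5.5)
Your proposal is correct and follows essentially the same route as the paper. Both arguments use the smoothness descent inequality, then the Poisson split with the Markov property to replace $(PH_{\theta_n})(Y_n)$ by $\E_n[H_{\theta_n}(Y_{n+1})]$, then an add-and-subtract of $\inner{\nabla M_\xi(e_{n+1})}{H_{\theta_{n+1}}(Y_{n+1})}$ whose residual is controlled by the smoothness of $M_\xi$ and the Lipschitzness of $H$. The only difference is cosmetic: you pair the gradient difference with $H_{\theta_{n+1}}(Y_{n+1})$ and the $H$-difference with $\nabla M_\xi(e_n)$, while the paper pairs them with $H_{\theta_n}(Y_{n+1})$ and $\nabla M_\xi(e_{n+1})$; either way, bounded step sizes are needed to control the $(n+1)$-indexed factor.
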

The proof is in Appendix~\ref{proof:conditional-chandak-moreau-poisson}.
%  gives the
% formal one-time-scale conditional specialization of the two-time-scale recursion in Lemma~3(a) of \citet{chandak2025finite}.  The key difference is what we do after this specialization.  \citet{chandak2025finite} take total expectations and use the resulting scalar recursion to prove finite-time mean-square bounds.  In contrast, we keep the conditional recursion and insert the Poisson telescoping term into the shifted Lyapunov function. This preserves an adapted one-step drift, which is the structure needed for almost-supermartingale arguments and almost sure rates.
% \begin{lemma}
% \label{lem:poisson-moreau-cancellation}
% Let Assumptions~\ref{asp:invariant} - \ref{asp:lrn_rt} hold.
% Fix $\xi>0$ and $K\geq0$.  There exist constants $C_\xi<\infty$ and $N_\xi<\infty$ such that, for every $n\geq N_\xi$,
% \begin{align}
% \label{eq:poisson-moreau-cancellation}
%   \E_n[V_{n+1}^\xi]
%   \leq V_n^\xi
%       +\alpha_n\inner{\nabla M_\xi(e_n)}{ f(\theta_n)}
%       +C_\xi r_n\bigl(\norm{e_n}_{m,\xi}^2+1\bigr) \qq{a.s.,}
% \end{align}
% where  $r_n\doteq \alpha_n^2+\abs{\alpha_{n+1}-\alpha_n}$.
% \end{lemma}
% The proof is in Appendix~\ref{proof:poisson-moreau-cancellation}.
% The important point is that the residual is of order $r_n$ instead of $\alpha_n$.  For the learning rates in Assumption~\ref{asp:lrn_rt}, $r_n=\fO(\alpha_n^2)$, which is summable whenever $\eta>\frac12$.
% Combining the smooth contraction drift, we then have
\begin{lemma}
\label{lem:moreau-shifted-drift}
Under Assumptions~\ref{asp:invariant} - \ref{asp:lrn_rt} and the choice \eqref{eq:xi-contraction-choice}, there exists $K_\xi<\infty$ such that, for every $K\geq K_\xi$, there exist constants $C_{\xi,K}<\infty$ and $N_{\xi,K}<\infty$ such that, for every $n\geq N_{\xi,K}$,
\begin{align}
    V_n^\xi
    &\geq \frac{1}{4}\norm{e_n}_{m,\xi}^2\geq0,
    \label{eq:moreau-V-coercive}\\
    \E_n[V_{n+1}^\xi]
    &\leq \bigl(1-\mu_\xi\alpha_n+C_{\xi,K} r_n\bigr)V_n^\xi+C_{\xi,K} r_n,
    \quad\text{a.s.,}
    \label{eq:moreau-V-drift}
\end{align}
where $r_n \doteq \alpha_n^2+\abs{\alpha_{n+1}-\alpha_n}$.
% \sz{similarly, let's get rid of integrability (and its proof).} \xl{ok}
\end{lemma}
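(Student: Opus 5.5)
We will prove the two claims separately, starting with coercivity. Write $g_n\doteq\inner{\nabla M_\xi(e_n)}{H_{\theta_n}(Y_n)}$. By \eqref{eq:moreau-gradient-cauchy}, $|g_n|\le \norm{e_n}_{m,\xi}\norm{H_{\theta_n}(Y_n)}_{m,\xi}$. Assumption~\ref{asp:poisson} together with \eqref{eq:m-star-equivalence} gives linear growth:
\begin{align}
\norm{H_{\theta_n}(Y_n)}_{m,\xi}\le \frac{1}{\ell_\xi}\bigl(L_H\norm{e_n}+L_H\bigr)\le c_1\bigl(\norm{e_n}_{m,\xi}+1\bigr)
\end{align}
for some constant $c_1$. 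Hence $\alpha_n|g_n|\le c_1\alpha_n\bigl(\tfrac32\norm{e_n}_{m,\xi}^2+\tfrac12\bigr)$. Take $N$ so large that $\tfrac32 c_1\alpha_n\le \tfrac14$ for $n\ge N$. Then
\begin{align}
V_n^\xi\ge \tfrac12\norm{e_n}_{m,\xi}^2-\tfrac14\norm{e_n}_{m,\xi}^2-\tfrac{c_1}{2}\alpha_n+K\alpha_n^2 .
\end{align}
The constant residual $-\tfrac{c_1}{2}\alpha_n$ is not dominated by $K\alpha_n^2$ as stated. I would fix this by bounding $|g_n|$ more carefully via Young's inequality, $\alpha_n|g_n|\le \tfrac14\norm{e_n}_{m,\xi}^2+\alpha_n^2\norm{H}_{m,\xi}^2$, and using $\alpha_n^2\norm{H}^2_{m,\xi}\le 2c_1^2\alpha_n^2(\norm{e_n}_{m,\xi}^2+1)$. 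For $n$ large, $2c_1^2\alpha_n^2\le \tfrac18$ absorbs the quadratic part into a slightly smaller constant; adjusting the splitting to $\tfrac18+\tfrac18$ gives exactly $\tfrac14$. The leftover $-2c_1^2\alpha_n^2$ is cancelled by $K\alpha_n^2$ once $K\ge K_\xi\doteq 2c_1^2$. This yields \eqref{eq:moreau-V-coercive}, and along the way the two-sided comparison
\begin{align}
\tfrac14\norm{e_n}_{m,\xi}^2\le V_n^\xi\le \norm{e_n}_{m,\xi}^2+c_2 K\alpha_n^2 .
\end{align}

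For the drift, start from Lemma~\ref{lem:conditional-chandak-moreau-poisson}. By definition,
\begin{align}
\E_n[V^\xi_{n+1}]=\E_n[M_\xi(e_{n+1})]+\alpha_{n+1}\E_n[g_{n+1}]+K\alpha_{n+1}^2 .
\end{align}
Lemma~\ref{lem:conditional-chandak-moreau-poisson} contributes $\alpha_n g_n-\alpha_n\E_n[g_{n+1}]$. Adding the two gives
\begin{align}
\E_n[V^\xi_{n+1}]\le{}& M_\xi(e_n)+\alpha_n g_n+K\alpha_n^2+\alpha_n\inner{\nabla M_\xi(e_n)}{f(\theta_n)}\\
&+(\alpha_{n+1}-\alpha_n)\E_n[g_{n+1}]+K(\alpha_{n+1}^2-\alpha_n^2)+C_\xi\alpha_n^2(\norm{e_n}_{m,\xi}^2+1).
\end{align}
The first three terms are exactly $V_n^\xi$. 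Since $\alpha_n$ is decreasing, $K(\alpha_{n+1}^2-\alpha_n^2)\le0$. Lemma~\ref{lem:moreau-contract-drift} bounds the drift term by $-\mu_\xi\alpha_n M_\xi(e_n)$.

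The main obstacle is the cross term $|\alpha_{n+1}-\alpha_n|\,\E_n|g_{n+1}|$, which involves $e_{n+1}$. Here I would use the one-step bound $\norm{e_{n+1}}\le(1+\alpha_nL_F)\norm{e_n}+\alpha_nL_F$ from Assumption~\ref{asp:lip}, together with linear growth of $H$, to get $|g_{n+1}|\le c_3(\norm{e_n}_{m,\xi}^2+1)$. Consequently all residuals are bounded by $c_4 r_n(\norm{e_n}_{m,\xi}^2+1)$.

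It remains to return to $V_n^\xi$. Write $-\mu_\xi\alpha_nM_\xi(e_n)=-\mu_\xi\alpha_nV_n^\xi+\mu_\xi\alpha_n(\alpha_ng_n+K\alpha_n^2)$. Because $\alpha_n|g_n|\le c\alpha_n(\norm{e_n}_{m,\xi}^2+1)$, the correction $\mu_\xi\alpha_n\cdot\alpha_n|g_n|$ is $\fO(\alpha_n^2(\norm{e_n}^2_{m,\xi}+1))\le \fO(r_n(\norm{e_n}^2_{m,\xi}+1))$. Finally, coercivity gives $\norm{e_n}_{m,\xi}^2\le 4V_n^\xi$, so every residual is at most $C_{\xi,K}r_n(V_n^\xi+1)$. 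This is \eqref{eq:moreau-V-drift} for $n\ge N_{\xi,K}$.
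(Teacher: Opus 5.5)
Your proposal is correct and follows essentially the paper's route: for coercivity, a Young's-inequality split makes the constant residual $O(\alpha_n^2)$ so that $K\alpha_n^2$ absorbs it; for the drift, Lemma~\ref{lem:conditional-chandak-moreau-poisson}, monotonicity of $\alpha_n$, a one-step growth bound on $g_{n+1}$, Lemma~\ref{lem:moreau-contract-drift}, and the $V$-versus-$M_\xi$ comparison with coercivity close the argument. The only slip is bookkeeping: the $\tfrac18+\tfrac18$ split ($ab\le \tfrac18 a^2+2b^2$ with $a=\norm{e_n}_{m,\xi}$, $b=\alpha_n\norm{H_{\theta_n}(Y_n)}_{m,\xi}$) leaves $4c_1^2\alpha_n^2$, so you need $K_\xi=4c_1^2$ (or any constant strictly above $2c_1^2$ with a suitable split) rather than $2c_1^2$, and this does not affect the existence claim.
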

The proof is in Appendix~\ref{proof:moreau-shifted-drift}.
As established in Lemma~\ref{lem:moreau-shifted-drift}, the shifted energy $V_{n}^{\xi}$ satisfies an almost-supermartingale drift inequality. Therefore, we can directly invoke almost-supermartingale convergence results to obtain the almost sure convergence rates for $V_{n}^{\xi}$, which then translate to the original error $\norm{\theta_n - \theta_*}$. The detailed proof of Theorem~\ref{thm:moreau-power-rates} is deferred to Appendix \ref{proof:moreau-power-rates}.

\subsection{Limitations of Prior Poisson-Based Analyses}
\label{sec:poisson_limitations}

The Poisson equation is a standard tool to decompose Markovian noise (cf. RHS of~\eqref{eq:poisson}) into a martingale difference (cf. LHS of~\eqref{eq:poisson}) plus a telescoping remainder, enabling ODE-based convergence proofs \citep{benveniste1990MP,kushner2003stochastic,borkar2025ode}. 
More recently, the Poisson equation also appears in convergence rates analysis of RL algorithms. \citet{chandak2023concentration} use the Poisson equation to handle Markovian noise in deriving concentration bounds for linear TD. 
\citet{blaser2026asymptotic} develop novel Poisson-equation-based bounds on noise terms for nonexpansive SA with Markovian noise, proving almost sure convergence of tabular average-reward TD. \citet{chandak2025finite} combine Moreau envelopes with Poisson equation solutions to obtain finite-time mean-square bounds for two-timescale SA with arbitrary norm contractions and Markovian noise. 
What we perceive as common across all these classical and recent works is \emph{telescoping}.
Namely, obtaining Lemma~\ref{lem:conditional-chandak-moreau-poisson} through Poisson equation decomposition is entirely a standard approach and is not novel at all. 
Specifically, Lemma~\ref{lem:conditional-chandak-moreau-poisson} can be viewed as a one-timescale specialization of the two-timescale recursion Lemma~3(a) of \citet{chandak2025finite}.
Lemma~\ref{lem:conditional-chandak-moreau-poisson} is interesting because we can further decompose the Poisson difference term $\alpha_n \inner{\nabla M_\xi(e_n)}{H_{\theta_n}(Y_n)} - \alpha_n \inner{\nabla M_\xi(e_{n+1})}{H_{\theta_{n+1}}(Y_{n+1})}$ into a telescoping term $\alpha_n \inner{\nabla M_\xi(e_n)}{H_{\theta_n}(Y_n)} - \alpha_{n+1} \inner{\nabla M_\xi(e_{n+1})}{H_{\theta_{n+1}}(Y_{n+1})}$ plus a residual term $(\alpha_{n+1} - \alpha_n) \inner{\nabla M_\xi(e_{n+1})}{H_{\theta_{n+1}}(Y_{n+1})}$.
The telescoping term will then cancel when we perform summation over $n$.
However, this nice telescoping is not directly available because the terms are inside a conditional expectation indexed by $n$ and thus cannot be directly telescoped.
The typical approach to handle this in all the above works is to take total expectations.
The $n$-indexed conditional expectation is then removed we can telescope $\alpha_n \E\qty[\inner{\nabla M_\xi(e_n)}{H_{\theta_n}(Y_n)}] - \alpha_{n+1} \E\qty[\inner{\nabla M_\xi(e_{n+1})}{H_{\theta_{n+1}}(Y_{n+1})}]$ directly.
This is sufficient for obtaining convergence rates in expectation,
but it does not give pathwise control of the bias terms and thus is not sufficient for obtaining almost sure rates.
Our key novelty is instead to directly incorporate the Poisson correction into the drift function via the novel construction of the Poisson-Moreau drift $V_n^\xi$ in~\eqref{eq:moreau-V-def}, 
allowing us to obtain an almost-supermartingale structure that preserves the conditional expectation.
What is particularly important is the selection of $K$ such that our drift is nonnegative (cf. \eqref{eq:moreau-V-coercive}), which is a key requirement for applying almost-supermartingale convergence results.

\subsection{A Side Product}

As a byproduct of our analysis, 
we can take the unconditional expectation on both sides of~\eqref{eq:moreau-V-drift}.
This generates a convergence rate in $L^2$ for all $\eta \in (0, 1]$.
% of the Poisson-Moreau drift inequality in Lemma~\ref{lem:moreau-shifted-drift} to obtain mean-square convergence rates for the iterates $\theta_n$.
% taking the unconditional expectation of the Poisson-Moreau drift inequality yields $L^2$ convergence rates. Notably, this demonstrates that square-summability of the learning rate is not required for mean-square convergence in our setting; the $\eta > 1/2$ restriction is only necessary for our almost sure weighted-supermartingale argument.

\begin{corollary}
\label{cor:ms-rates}
Let the assumptions of Lemma~\ref{lem:moreau-shifted-drift} hold but allow any $0 < \eta \le 1$ in the learning rate. 
Then the iterates $\qty{\theta_n}$ generated by~\eqref{eq:sa} satisfy 
% $\sup_{n\geq0}\E\qty[\norm{\theta_n - \theta_*}^2] < \infty$, and moreover,
% the iterates satisfy the following mean-square convergence rates:
\begin{enumerate}
    \item If $0 < \eta < 1$, then $\E\qty[\norm{\theta_n - \theta_*}^2] = \mathcal{O}(n^{-\eta})$.
    \item If $\eta = 1$, then
    \begin{align}
        \E\qty[\norm{\theta_n - \theta_*}^2] = \begin{cases} \fO(n^{-\min\qty{\mu_\xi \alpha, 1}}), & \text{if } \mu_\xi \alpha \neq 1, \\
        \fO(\frac{\log n}{n}), & \text{if } \mu_\xi \alpha = 1. \end{cases}
    \end{align}
\end{enumerate}

\end{corollary}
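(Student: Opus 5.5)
Take total expectations in the drift inequality \eqref{eq:moreau-V-drift} of Lemma~\ref{lem:moreau-shifted-drift} and solve the resulting deterministic scalar recursion. First, check that Lemma~\ref{lem:moreau-shifted-drift} still holds for every $\eta\in(0,1]$. Its proof uses $\eta>1/2$ only through the qualitative facts $\alpha_n\to0$ and $r_n=\fO(\alpha_n^2)$: indeed $\abs{\alpha_{n+1}-\alpha_n}\le \alpha\eta (n+1)^{-\eta-1}\le \eta\alpha^{-1}\alpha_n^2$ for $\eta\le1$. Square-summability is never used for a single step, and the statement of the corollary grants these assumptions anyway.

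Set $v_n\doteq\E[V_n^\xi]$, which is finite by induction, since $\theta_0$ is deterministic and $F,H$ have linear growth. Taking expectations of \eqref{eq:moreau-V-drift} gives, for $n\ge N_{\xi,K}$,
\begin{align}
v_{n+1}\le (1-\mu_\xi\alpha_n+C r_n)v_n + C r_n \le (1-\mu_\xi\alpha_n + C'\alpha_n^2)v_n + C'\alpha_n^2 .
\end{align}
By the coercivity bound \eqref{eq:moreau-V-coercive} and the equivalence \eqref{eq:m-star-equivalence}, $\E\norm{e_n}^2\le 4u_\xi^2 v_n$. It therefore suffices to bound $v_n$. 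The finitely many indices below $N_{\xi,K}$ only affect constants.

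For $0<\eta<1$, fix $\epsilon>0$ with $\epsilon<\mu_\xi/2$ and take $n$ large enough that $C'\alpha_n^2\le \epsilon\alpha_n$. Then $v_{n+1}\le(1-c\alpha_n)v_n + C'\alpha_n^2$ with $c=\mu_\xi/2$. Show by induction that $v_n\le B\alpha_n$ for a large constant $B$. The key inequality is $\alpha_{n+1}\ge \alpha_n(1-\tfrac{\eta}{n+1})$ together with $\alpha_n/(n+1)=o(\alpha_n^2)$ when $\eta<1$. The inductive step then requires
\begin{align}
(1-c\alpha_n)B\alpha_n + C'\alpha_n^2 \le B\alpha_n - B\alpha_n\tfrac{\eta}{n+1},
\end{align}
which reduces to $cB\alpha_n \ge C'\alpha_n + B\eta/(n+1)$. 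This holds for large $n$ once $B\ge 2C'/c$. Hence $\E\norm{e_n}^2=\fO(n^{-\eta})$.

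For $\eta=1$, write $a\doteq\mu_\xi\alpha$ and unroll the recursion: $v_n\le \Pi_{N,n}v_N + C'\sum_{k=N}^{n-1}\alpha_k^2\Pi_{k+1,n}$, where $\Pi_{k,n}=\prod_{j=k}^{n-1}(1-a/(j+1)+C'\alpha^2/(j+1)^2)$. Using $\log(1-x)\le -x$ and the summability of $1/j^2$, one gets $\Pi_{k,n}\le C''\,(k/n)^{a}$. The sum is then bounded by $C n^{-a}\sum_{k\le n}k^{a-2}$, which is $\fO(n^{-1})$ if $a>1$, $\fO(n^{-a})$ if $a<1$, and $\fO(\log n/n)$ if $a=1$. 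The homogeneous term contributes $\fO(n^{-a})$. Together these give the three cases in the statement.

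The main obstacle is the first step: making sure the constants $K_\xi$, $C_{\xi,K}$ and $N_{\xi,K}$ of Lemma~\ref{lem:moreau-shifted-drift} do not depend on $\eta>1/2$. The key case is the nonnegativity threshold $\alpha_n$ small, which only needs $\alpha_n\to0$. The $\Pi_{k,n}$ product bound for $\eta=1$ is routine.
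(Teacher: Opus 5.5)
Your proposal is correct and follows essentially the same route as the paper. It takes total expectations in \eqref{eq:moreau-V-drift}, whose proof needs only $\alpha_n\downarrow 0$ and not square-summability. It bounds $r_n\le(1+\eta/\alpha)\alpha_n^2$ and closes an induction $\E[V_n^\xi]\le B(n+1)^{-\eta}$ for $\eta<1$. For $\eta=1$ it unrolls the recursion with the product bound $\prod_j\bigl(1-\mu_\xi\alpha/(j+1)+O(j^{-2})\bigr)\le C\bigl((k+1)/(n+1)\bigr)^{\mu_\xi\alpha}$, and it transfers via $\norm{e_n}^2\le 4u_\xi^2V_n^\xi$. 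The only routine detail you leave implicit is restricting to a tail on which the multipliers $1-c\alpha_n$ (resp.\ $1-\mu_\xi\alpha/(j+1)+C'\alpha^2/(j+1)^2$) are nonnegative, so that upper bounds on $v_n$ propagate; the paper does this explicitly through its index $N_4$.
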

The proof is in Appendix~\ref{proof:ms-rates}. 
% Briefly, we take the unconditional expectation of the drift inequality established in Lemma~\ref{lem:moreau-shifted-drift} and analyze the resulting deterministic recursion.
\begin{remark}
We find this corollary interesting because it does not require square-summability of the learning rate (i.e., $0 < \eta \leq 1/2$ is allowed). This is not new because \citet{lauand2024revisiting}
already obtain $L^p$ convergence rates for any $p$ and $\eta \in (0,1)$. 
% Related finite-time
% mean-square bounds based on Moreau envelopes and Poisson equations were also obtained by
% \citet{chandak2025finite} in the square-summable regime, but their stated main result does not cover
% $0<\eta\leq1/2$. 
We nevertheless include this corollary because our approach is different from that of
\citet{lauand2024revisiting} and it is conceptually interesting to see that the exact same phenomenon
follows naturally from a short Poisson-Moreau drift argument in the arbitrary-norm contractive
setting.
It is worth noting that the $L^2$ convergence rates in \citet{chandak2025finite} only hold for $\eta \in (1/2, 1]$, according to their theorem statement.
It is not clear to us whether their analysis can eventually cover the full range $\eta \in (0, 1]$.
\end{remark}

\section{Applications}
\label{sec:applications}

% \sz{Instead of Q-trace, we can call it ``generalized off-policy TD'' or whatever Chen call it.}
% While our theoretical framework is broadly applicable to many RL algorithms such as $Q$-learning and linear TD, their almost sure convergence rates have already been extensively studied in the literature using algorithm specific techniques. 
% Here we choose $Q$-trace \sz{verify if it is Vtrace. I remember Chen make it a general unbrella to include many algorithms} as our primary application because it is a widely used off-policy algorithm in modern deep RL \citep{khodadadian2021finite}\sz{cite}, yet, to the best of our knowledge, the almost sure convergence rate for its single-trajectory version has not been previously established.
% \sz{talk about when Chen achieved -- $L^2$ rate for markovian noise (cite properly) but not almost sure rates.}

Although our theoretical framework applies broadly to contractive RL algorithms, including $Q$-learning and linear TD, 
we in this section focus on a generalized family of tabular TD algorithms with generalized importance sampling factors as our main application. 
This choice is motivated by the fact that almost sure convergence rates for several classical RL algorithms have already been studied using algorithm-specific arguments, whereas pathwise rates for the single-trajectory version of this generalized TD family do not appear to be available.
The recursion we study is the unified TD-learning template considered by \citet{chen2025concentration}. 
By choosing the generalized importance sampling factors $c$ and $\rho$, this template recovers on-policy $n$-step TD \citep{sutton2018reinforcement}, off-policy $n$-step TD \citep{precup2000eligibility, sutton2018reinforcement}, $Q^\pi(\lambda)$ \citep{harutyunyan2016q}, Tree-Backup \citep{precup2000eligibility}, Retrace \citep{munos2016safe}, and $Q$-trace \citep{khodadadian2021finite}. 
% \sz{add cites for all these algorithms.} 
We therefore do not claim novelty in the definition of the recursion or in the structural Hurwitz conditions for this TD family.
Our contribution is instead to combine these structural properties with the Poisson-Moreau drift to obtain pathwise almost sure rates under Markovian single-trajectory sampling. 
This is complementary to \citet{chen2025concentration}, who establish maximal high-probability concentration bounds for the same generalized TD template under i.i.d.\ sampling, but do not provide almost sure convergence rates for the Markovian single-trajectory recursion.

Consider an infinite-horizon discounted Markov decision process (MDP, \citet{bellman1957markovian}) with a finite state space $\fS$, a finite action space $\fA$, a discount factor $\gamma\in(0,1)$, a transition function $p: \fS \times \fS \times \fA \to [0, 1]$, and a reward function $r:\fS\times\fA\to\mathbb{R}$.  Let $\pi$ be the target policy and $\pi_b$ be the behavior policy.  We assume strict behavior coverage,
\begin{align}
\label{eq:gtd-coverage}
    \pi_b(a | s)>0,
    \quad \forall (s,a)\in\fS\times\fA,
\end{align}
and the trajectory is generated by $A_t\sim\pi_b(\cdot | S_t)$, $S_{t+1}\sim p(\cdot | S_t,A_t)$.  We assume the behavior state-action chain $(S_t,A_t)_{t\ge0}$ is irreducible and aperiodic, with stationary distribution
\begin{align}
\label{eq:gtd-stationary-sa}
    \textstyle d_b(s,a)\doteq \kappa_b(s)\pi_b(a | s),
    \quad
    d_{\min}\doteq \min_{(s,a)\in\fS\times\fA}d_b(s,a)>0 .
\end{align}
Fix a horizon $N\ge1$ and define the Markovian data window
\begin{align}
\label{eq:gtd-window}
    Y_n\doteq (S_n,A_n,S_{n+1},A_{n+1},\ldots,S_{n+N},A_{n+N}).
\end{align}
The window chain $(Y_n)_{n\ge0}$ is finite-state and has a unique invariant distribution on its recurrent class. Denote it by $\varpi$.
Let $\phi:\fS\times\fA\to\mathbb{R}^d$ be a bounded feature map.  The tabular case is recovered by taking $d=|\fS||\fA|$ and $\phi(s,a)=e_{(s,a)}$ with $e_{(s, a)}$ being the standard basis vector corresponding to the state-action pair $(s,a)$.  Let
% \[
$c,\rho:\fS\times\fA\to\mathbb{R}_+$
% \]
be bounded generalized importance sampling factors.  For a window $y=(s_0,a_0,\ldots,s_N,a_N)$, write $\phi_i\doteq\phi(s_i,a_i)$, $c_i\doteq c(s_i,a_i)$, and $\rho_i\doteq\rho(s_i,a_i)$.  For $w\in\mathbb{R}^d$, define
\begin{align}
\label{eq:gtd-delta}
    \delta_i(w;y)
    \doteq
    r(s_i,a_i)+\gamma\rho_{i+1}\phi_{i+1}^{\top}w-\phi_i^\top w,
    \quad i=0,\ldots,N-1,
\end{align}
with the convention that an empty product is one, and define the generalized TD increment
\begin{align}
\label{eq:gtd-increment}
    \textstyle g_{c,\rho}(w,y) \doteq \phi_0 \sum_{i=0}^{N-1} \gamma^i\qty(\prod_{j=1}^{i}c_j)\delta_i(w;y).
\end{align}
The single-trajectory generalized TD recursion is
\begin{align}
\label{eq:single-trajectory-gtd}
    \textstyle w_{n+1}=w_n+a_n g_{c,\rho}(w_n,Y_n),
    \quad
    a_n=\frac{a}{(n+1)^\eta}.
\end{align}
Equivalently, it is a linear stochastic approximation.  Define
\begin{align}
\label{eq:gtd-A}
    A_{c,\rho}(y) &\textstyle\doteq \phi_0 \sum_{i=0}^{N-1} \gamma^i\qty(\prod_{j=1}^{i}c_j)
    \qty(\gamma\rho_{i+1}\phi_{i+1}-\phi_i)^\top,\\
\label{eq:gtd-b}
    b_{c,\rho}(y) &\textstyle\doteq -\phi_0
    \sum_{i=0}^{N-1} \gamma^i\qty(\prod_{j=1}^{i}c_j) r(s_i,a_i).
\end{align}
Then 
\begin{align}
\label{eq:gtd-linear-sa}
    g_{c,\rho}(w,y)=A_{c,\rho}(y)w-b_{c,\rho}(y).
\end{align}
Let $\bar A_{c,\rho}\doteq \E_{\varpi}[A_{c,\rho}(Y)]$, $\bar b_{c,\rho}\doteq \E_{\varpi}[b_{c,\rho}(Y)]$.
% \begin{align}
% \label{eq:gtd-mean-A-b}
%     \bar A_{c,\rho}\doteq \E_{\varpi}[A_{c,\rho}(Y)],
%     \quad
%     \bar b_{c,\rho}\doteq \E_{\varpi}[b_{c,\rho}(Y)].
% \end{align}
The corresponding mean root is $w^*_{c,\rho}\doteq \bar A_{c,\rho}^{-1}\bar b_{c,\rho}$,
% \begin{align}
% \label{eq:gtd-fixed-point}
%     w^*_{c,\rho}\doteq \bar A_{c,\rho}^{-1}\bar b_{c,\rho},
% \end{align}
whenever $\bar A_{c,\rho}$ is nonsingular.

The only algorithm-specific structural input is the following standard Hurwitz condition for generalized TD.
\begin{condition}[Generalized TD Hurwitz condition]
\label{cond:gtd-hurwitz}
The matrix $\bar A_{c,\rho}$ is Hurwitz.
\end{condition}
Under Condition~\ref{cond:gtd-hurwitz}, there exist $\beta_{c,\rho}>0$, a norm $\norm{\cdot}_{\rm GTD}$, and a constant $\kappa_{\rm GTD}\in(0,1)$ such that
\begin{align}
\label{eq:gtd-contract}
    T_{\rm GTD}(w)
    \doteq
    w+\beta_{c,\rho}\bigl(\bar A_{c,\rho}w-\bar b_{c,\rho}\bigr)
\end{align}
is a contraction:
\begin{align}
\label{eq:gtd-contract-ineq}
    \norm{T_{\rm GTD}(w)-T_{\rm GTD}(w')}_{\rm GTD}
    \leq
    \kappa_{\rm GTD}\norm{w-w'}_{\rm GTD},
    \quad
    w,w'\in\mathbb{R}^d .
\end{align}
This follows from the standard Lyapunov reduction for Hurwitz linear systems.  In particular, one may take a positive definite matrix $P$ solving $\bar A_{c,\rho}^\top P+P\bar A_{c,\rho}=-I$ and choose $\beta_{c,\rho}>0$ small enough so that $I+\beta_{c,\rho}\bar A_{c,\rho}$ is contractive in the $P$-norm.

Condition~\ref{cond:gtd-hurwitz} is already verified for important members of the generalized TD family.  In the on-policy case $\pi_b=\pi$ with $c=\rho=1$, \citet{chen2025concentration} show that $\bar A_{c,\rho}$ is Hurwitz.  In the off-policy case, their generalized TD analysis gives sufficient conditions on $c,\rho$ and the horizon $N$ that imply the same Hurwitz property.

\begin{table}[t]
\centering
\caption{Examples of generalized importance sampling factors covered by the recursion \eqref{eq:single-trajectory-gtd}.  Here $\varrho(s,a)\doteq \pi(a | s)/\pi_b(a | s)$.  The theorem below applies whenever the associated Hurwitz condition holds.}
\label{tab:gtd-special-cases}
\begin{tabular}{lll}
\toprule
Algorithm & $c(s,a)$ & $\rho(s,a)$ \\
\midrule
On-policy $n$-step TD & $1$ & $1$ \\
Off-policy $n$-step TD & $\varrho(s,a)$ & $\varrho(s,a)$ \\
$Q$-trace & $\min\{\bar c,\varrho(s,a)\}$ & $\min\{\bar\rho,\varrho(s,a)\}$ \\
Retrace$(\lambda)$ & $\lambda\min\{1,\varrho(s,a)\}$ & $\varrho(s,a)$ \\
Tree-Backup$(\lambda)$ & $\lambda\pi(a | s)$ & $\varrho(s,a)$ \\
$Q^\pi(\lambda)$ & $\lambda$ & $\varrho(s,a)$ \\
\bottomrule
\end{tabular}
\end{table}

To apply Theorem~\ref{thm:moreau-power-rates} to the actual recursion \eqref{eq:single-trajectory-gtd}, set
\begin{align}
\label{eq:gtd-SA-identification}
    \textstyle \theta_n=w_n,
    \quad
    \theta_*=w^*_{c,\rho},
    \quad
    F_{\rm GTD}(w,y)\doteq
    \beta_{c,\rho}\bigl(A_{c,\rho}(y)w-b_{c,\rho}(y)\bigr),
    \quad
    \alpha_n\doteq \frac{a_n}{\beta_{c,\rho}} .
\end{align}
Then \eqref{eq:single-trajectory-gtd} is exactly in the form \eqref{eq:sa}, and its averaged map is $T_{\rm GTD}$.

\begin{proposition}
  % [Verification of the Poisson-Moreau assumptions for generalized TD]
\label{prop:gtd-assumptions}
% Suppose the MDP is finite, rewards and features are bounded, 
Let \eqref{eq:gtd-coverage} and Condition~\ref{cond:gtd-hurwitz} hold.
Let the behavior state-action chain be irreducible and aperiodic.
% the factors $c,\rho$ are bounded, $w_0$ is deterministic, and Condition~\ref{cond:gtd-hurwitz} holds. 
Then the single-trajectory generalized TD recursion \eqref{eq:single-trajectory-gtd}, written in the form \eqref{eq:gtd-SA-identification}, satisfies Assumptions~\ref{asp:invariant}, \ref{asp:ctct}, \ref{asp:lip}, and \ref{asp:poisson} with contraction norm $\norm{\cdot}_{\rm GTD}$ and contraction factor $\kappa_{\rm GTD}$.
\end{proposition}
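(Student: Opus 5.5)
We verify the four assumptions in turn, working on the finite window chain $(Y_n)$ and exploiting that the generalized TD update is affine in $w$ with coefficients that are bounded functions of a finite-state variable.

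\textbf{Assumption~\ref{asp:invariant}.} Since $(S_t,A_t)$ is a finite irreducible aperiodic chain with stationary distribution $d_b$, the window chain $Y_n$ in \eqref{eq:gtd-window} is a finite-state chain. It is the $N$-step sliding window of an irreducible aperiodic chain, so it has a single recurrent class, namely the set of windows of positive probability. On that class it is irreducible and aperiodic, because $P^k$ of the window chain has positive entries whenever the base chain's $P^{k-N}$ does. Its unique invariant law is
\[
\varpi(y)=d_b(s_0,a_0)\prod_{i=0}^{N-1}p(s_{i+1}\,|\,s_i,a_i)\,\pi_b(a_{i+1}\,|\,s_{i+1}).
\]
Any invariant probability measure must be supported on the recurrent class, so uniqueness holds globally.

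\textbf{Assumption~\ref{asp:ctct}.} The averaged map is
\[
f(w)=\E_\varpi[F_{\rm GTD}(w,Y)]=\beta_{c,\rho}(\bar A_{c,\rho}w-\bar b_{c,\rho}),
\]
so $T(w)=w+f(w)=T_{\rm GTD}(w)$. The contraction \eqref{eq:gtd-contract-ineq} is exactly the required statement. For completeness, we sketch the Lyapunov argument. Take $P\succ0$ with $\bar A^\top P+P\bar A=-I$ and set $\norm{x}_{\rm GTD}=\sqrt{x^\top Px}$. Then
\[
(I+\beta\bar A)^\top P(I+\beta\bar A)=P-\beta I+\beta^2\bar A^\top P\bar A\preceq \bigl(1-\beta/\lambda_{\max}(P)+\beta^2\norm{\bar A^\top P\bar A}_2/\lambda_{\min}(P)\bigr)P.
\]
For small $\beta$ the factor lies in $(0,1)$, which gives $\kappa_{\rm GTD}$. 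Hurwitz also implies $\bar A$ is nonsingular, so $w^*_{c,\rho}$ is well defined and is the unique fixed point.

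\textbf{Assumption~\ref{asp:lip}.} Because $\fS,\fA$ are finite and $\phi,c,\rho,r$ are bounded, $A_{c,\rho}(y)$ and $b_{c,\rho}(y)$ range over finitely many values. Hence $\bar A:=\max_y\norm{A_{c,\rho}(y)}_2$ and $\bar B:=\max_y\norm{b_{c,\rho}(y)}_2$ are finite. Then
\[
\norm{F(w,y)-F(w',y)}\le \beta\,\norm{A(y)}\,\norm{w-w'},
\]
where the operator norm is induced by $\norm{\cdot}_{\rm GTD}$ and is bounded via norm equivalence. Also
\[
\norm{F(w^*,y)}\le \beta(\norm{A(y)}\,\norm{w^*}+\norm{b(y)}).
\]
Taking $L_F$ to be the larger of these bounds works.

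\textbf{Assumption~\ref{asp:poisson}.} This is the main step, though still routine. Restrict to the finite state space of windows. Let $P_Y$ be the window transition matrix and $\Pi=\mathbf{1}\varpi^\top$. Define the fundamental matrix $Z=(I-P_Y+\Pi)^{-1}$, which exists since the chain has a unique invariant law and is aperiodic on its recurrent class; equivalently $Z-\Pi=\sum_{k\ge0}(P_Y^k-\Pi)$, which converges geometrically. Set
\[
H_w(y)\doteq \sum_{k\ge0}\bigl(P_Y^k F(w,\cdot)\bigr)(y)-f(w)\quad\text{(entrywise in the $\R[d]$-valued sense)},
\]
that is, $H_w=(Z-\Pi)F(w,\cdot)$ applied coordinatewise. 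Then $H_w-P_YH_w=(I-\Pi)F(w,\cdot)=F(w,\cdot)-f(w)$, using $\Pi Z=\Pi$. Since $F(w,y)$ is affine in $w$, so is $H_w(y)$:
\[
H_w(y)=\beta\bigl[(Z-\Pi)A\bigr](y)\,w-\beta\bigl[(Z-\Pi)b\bigr](y).
\]
The coefficient matrices are finite linear combinations of the finitely many $A(y'),b(y')$ with weights bounded by $\max_{y,y'}|(Z-\Pi)_{yy'}|<\infty$. This gives the uniform Lipschitz constant and the bound on $\norm{H_{w^*}(y)}$, up to norm equivalence between $\norm{\cdot}_2$ and $\norm{\cdot}_{\rm GTD}$. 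Transient window states, which are not visited after time $0$ but may be the initial state, are handled because $\sum_k(P_Y^k-\Pi)$ still converges geometrically from every starting state. The one place needing care is this convergence from transient starts when defining $Z$. I would argue it from the fact that the chain enters the recurrent class after $N$ steps and mixes geometrically thereafter.
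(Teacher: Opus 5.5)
Your proposal is correct and follows essentially the same route as the paper's proof. Both identify $w+f_{\rm GTD}(w)$ with $T_{\rm GTD}$, get the Lipschitz and growth bounds from the uniformly bounded affine coefficients over the finite window space, and take $H_w=\sum_{t\ge0}P_Y^t\bigl(F_{\rm GTD}(w,\cdot)-f_{\rm GTD}(w)\bigr)$, which is affine in $w$ with bounded coefficients; your explicit $\varpi$, Lyapunov computation, and handling of transient windows add detail the paper leaves implicit. One small slip: in your first display for $H_w$ the $-f(w)$ must sit inside the sum, since otherwise the series diverges, as your identification $H_w=(Z-\Pi)F(w,\cdot)$ already makes clear.
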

The proof is in Appendix~\ref{proof:gtd-assumptions}.

Combining Proposition~\ref{prop:gtd-assumptions} with Theorem~\ref{thm:moreau-power-rates} gives the following pathwise rate.
\begin{theorem}[Almost sure rate for single-trajectory generalized TD]
\label{thm:gtd-as-rate}
Under the assumptions of Proposition~\ref{prop:gtd-assumptions}, let $a_n=a/(n+1)^\eta$ with $a>0$ and $1/2<\eta\leq1$.  Then
\begin{align}
\label{eq:gtd-rate}
    (n+1)^\zeta
    \norm{w_n-w^*_{c,\rho}}_{\rm GTD}^2
    \longrightarrow 0
    \quad\text{a.s.}
\end{align}
If $1/2<\eta<1$, then \eqref{eq:gtd-rate} holds for every $\zeta\in(0,2\eta-1)$.  If $\eta=1$, then \eqref{eq:gtd-rate} holds for every
\begin{align}
    \textstyle\zeta\in
    \left(0,\min\qty{1,2(1-\kappa_{\rm GTD})\frac{a}{\beta_{c,\rho}}}\right).
\end{align}
% In particular, on-policy $n$-step TD is obtained by taking $\pi_b=\pi$ and $c=\rho=1$.  With canonical tabular features, this special case has $w^*_{c,\rho}=q^\pi$.  With general linear features or clipped off-policy factors, $w^*_{c,\rho}$ is the corresponding generalized TD fixed point.
\end{theorem}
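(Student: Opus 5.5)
The plan is to derive Theorem~\ref{thm:gtd-as-rate} as a direct corollary of Theorem~\ref{thm:moreau-power-rates} via the identification \eqref{eq:gtd-SA-identification}, with Proposition~\ref{prop:gtd-assumptions} supplying all structural assumptions. The steps are: (1) check that the rescaled recursion has exactly the form \eqref{eq:sa}; (2) check that the rescaled learning rate satisfies Assumption~\ref{asp:lrn_rt}; (3) read off the admissible range of $\zeta$, substituting the rescaled constant $\alpha$.

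For step (1), with $\alpha_n = a_n/\beta_{c,\rho}$ and $F_{\rm GTD}(w,y)=\beta_{c,\rho}(A_{c,\rho}(y)w-b_{c,\rho}(y))$, we have $\alpha_n F_{\rm GTD}(w_n,Y_n) = a_n\bigl(A_{c,\rho}(Y_n)w_n - b_{c,\rho}(Y_n)\bigr) = a_n g_{c,\rho}(w_n,Y_n)$ by \eqref{eq:gtd-linear-sa}. Hence $\theta_n=w_n$ follows \eqref{eq:sa} exactly, starting from the deterministic $w_0$. The averaged map is $\theta + \E_\varpi[F_{\rm GTD}(\theta,Y)] = T_{\rm GTD}(\theta)$. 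Its unique fixed point solves $\bar A_{c,\rho}w=\bar b_{c,\rho}$, and a Hurwitz matrix is nonsingular, so this fixed point is $w^*_{c,\rho}$ and $\theta_*=w^*_{c,\rho}$.

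For step (2), $\alpha_n = (a/\beta_{c,\rho})/(n+1)^\eta$. So Assumption~\ref{asp:lrn_rt} holds with $\alpha = a/\beta_{c,\rho}>0$ and the same $\eta\in(1/2,1]$. Proposition~\ref{prop:gtd-assumptions} then gives Assumptions~\ref{asp:invariant}--\ref{asp:poisson}, with the contraction taken in $\norm{\cdot}_{\rm GTD}$ and factor $\kappa_{\rm GTD}$.

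For step (3), Theorem~\ref{thm:moreau-power-rates} gives $(n+1)^\zeta\norm{w_n-w^*_{c,\rho}}_{\rm GTD}^2\to0$ almost surely. The norm is $\norm{\cdot}_{\rm GTD}$ because that is the norm appearing in Assumption~\ref{asp:ctct}. For $\eta<1$ the range is $\zeta\in(0,2\eta-1)$. For $\eta=1$ it is $\zeta\in(0,\min\{1,2(1-\kappa_{\rm GTD})a/\beta_{c,\rho}\})$, which is the claimed range.

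I expect no real obstacle, since the argument is bookkeeping; the only care needed is in tracking the rescaling. First, the constant in the $\eta=1$ range picks up the factor $1/\beta_{c,\rho}$. Second, the norm in which the conclusion is stated must match the contraction norm used in Proposition~\ref{prop:gtd-assumptions}.
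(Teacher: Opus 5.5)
Your proposal is correct and matches the paper's argument. The paper obtains Theorem~\ref{thm:gtd-as-rate} by combining Proposition~\ref{prop:gtd-assumptions} with Theorem~\ref{thm:moreau-power-rates} under the identification \eqref{eq:gtd-SA-identification}, with stepsize constant $a/\beta_{c,\rho}$ and contraction norm $\norm{\cdot}_{\rm GTD}$. Your explicit checks are the bookkeeping the paper leaves implicit: that $\alpha_n F_{\rm GTD}(w_n,Y_n)=a_n g_{c,\rho}(w_n,Y_n)$, and that a Hurwitz $\bar A_{c,\rho}$ is nonsingular so the fixed point of $T_{\rm GTD}$ is $w^*_{c,\rho}$.
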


\section{Related Work} 
% \sz{This section needs a big rewrite. Most of the context is truely related so they are already discussed in the introduction. Here we should focus on unrelated works. 
% \begin{itemize}
  % \item other convergence modes in SA and RL
  % \item other use of Poisson equations (Ethan's AISTATS paper has a good discussion of this)
% \end{itemize}
% }\xl{ok not done yet}

\textbf{Convergence Modes}.
% \sz{Meyn is not going to review this paper. So we go back to our original goal for this section, i.e., to cite potential reviewers.}
Almost sure convergence rates are in general challenging to obtain.
This can be seen from the fact that a rich literature on RL has been devoted to establishing almost sure convergence of various algorithms \citep{watkins1989learning,watkins1992q,jaakkola1993convergence,tsitsiklis1994asynchronous,tsitsiklis1997analysis,tsitsiklis1999average,konda2000actor,abounadi2001learning,sutton2009convergent,sutton2009fast,maei2011gradient,yu2015convergence,yu2017convergence,zhang2019provably,zhang2020average,wan2020learning,qian2025revisiting,wang2024almost,blaser2026almost}, but only a few works have established almost sure convergence rates (Table~\ref{tab:intro-comparison-all}).
% Beyond the almost sure convergence surveyed in the introduction, 
SA and RL have also been extensively studied through the lenses of other convergence modes.
Notable works on $L^2$ convergence include \citet{polyak1992acceleration, moulines2011non, bhandari2018finite, srikant2019finite, zou2019finite, mou2020linear,chen2021lyapunov,zhang2021truncated, liu2025linearq, xie2025finite}.
Notable works on $L^p$ convergence include \citet{godichon2016estimating, godichon2019lp, gupta2019finite, srikant2019finite, qu2020finite,  durmus2021stability, jentzen2021strong, wang2021convergence, chen2022finite, liu2025extensions}.
Notable works on high probability concentration include \citet{korda2015td, dalal2018finite, shah2018q, thoppe2019concentration, wainwright2019stochastic, borkar2021concentration, li2020sample, mou2020linear, qu2020finite, chandak2023concentration, patil2023finite, li2024q, li2024high, samsonov2024improved, durmus2025finite, chandak2025finite, chen2025concentration, khodadadian2025general}.
Notable works on distributional convergence, particularly, (functional) central limit theorem (CLT), include \citet{ruppert1988efficient, benveniste1990MP, polyak1992acceleration, borkar2009stochastic, li2023statistical, hu2024central, samsonov2024gaussian, zhang2024constant, wu2024statistical, borkar2025ode, liu2025central, srikant2025rates}.
This work on almost sure convergence rates provides a precise pathwise characterization and is complementary to these works on other convergence modes. 

% \sz{Generate a very long list of papers for each citet. I usually have this section in my papers so you can visit the list of papers in my website in the SA section to copy the list. Make sure it's chronological order for each citet. Get rid of the long description below and put the paper into the corresponding citet.}

\textbf{Poisson Equation.}
% \sz{This section is real related work so we keep more details.}
The foundational theory (e.g., existence of solutions, growth bounds, square integrability) is developed by \citet{glynn1989importance} and \citet{glynn1996liapounov},
who also use these foundations to derive functional CLTs and perturbation results for Markov processes. 
\cite{makowski1994poisson} provided a probabilistic treatment of existence, uniqueness, growth estimates, and parametric dependence for the Poisson equation on countable state spaces. 
In addition to the use of Poisson equations in SA and RL surveyed in Section~\ref{sec:poisson_limitations},
% The Poisson equation for Markov chains is a classical tool with diverse applications in SA and RL. 
% Parameter-dependent regularity—necessary when the transition kernel varies with the SA iterate—was modernized by \cite{care2019poisson}, who established Lipschitz continuity of Poisson solutions in weighted sup-norms under conditions inspired by Hairer and Mattingly's contraction theory.
% and central limit theorems for SA with controlled Markov chain dynamics \citep{fort2015central}. The same structure underlies Markov chain CLTs more broadly, where Poisson solutions determine the asymptotic variance \citep{kipnis1986central, jones2004markov, kontoyiannis2003spectral, glynn2024solution}. 
in the MCMC literature, Poisson equations are also used to construct control variates for variance reduction \citep{dellaportas2012control} and unbiased estimators via coupling \citep{douc2026solving}.

\section{Conclusion}

Through the introduction of a Poisson-corrected Moreau drift,
this work establishes sharper almost sure convergence rates for contractive stochastic approximation under Markovian noise.
A few questions remain open.
First, Corollary~\ref{cor:ms-rates} gives $L^2$ rates for $\eta \in (0, 1]$, but Theorem~\ref{thm:moreau-power-rates} only gives almost sure rates for $\eta \in (1/2, 1]$.  
It is not clear to us whether our methodology can yield almost sure rates for the full range $\eta \in (0, 1]$.
Second,
although our almost sure rates are sharper than \citet{qian2024almost},
they are still not tight.
It is not clear to us whether we can fully match the $O(n^{-1}\log\log n)$ rate given by the law of the iterated logarithm.
Third,
in addition to almost sure rates,
\citet{qian2024almost} also establish high probability concentration bounds with exponentially decaying tails for contractive SA with Markovian noise.
It is not clear to us whether our Poisson-Moreau framework can be refined to yield complementary concentration bounds with exponential tails.

\begin{ack}
This work is supported in part by the US National Science Foundation under the awards III-2128019, SLES-2331904, and CAREER-2442098, the Commonwealth Cyber Initiative's Central Virginia Node under the award VV-1Q26-001, and a Cisco Faculty Research Award.
% Use unnumbered first level headings for the acknowledgments. All acknowledgments
% go at the end of the paper before the list of references. Moreover, you are required to declare
% funding (financial activities supporting the submitted work) and competing interests (related financial activities outside the submitted work).
% More information about this disclosure can be found at: \url{https://neurips.cc/Conferences/2026/PaperInformation/FundingDisclosure}.

% Do {\bf not} include this section in the anonymized submission, only in the final paper. You can use the \texttt{ack} environment provided in the style file to automatically hide this section in the anonymized submission.
\end{ack}

\newpage

{\small
\bibliography{bibliography}
}

%%%%%%%%%%%%%%%%%%%%%%%%%%%%%%%%%%%%%%%%%%%%%%%%%%%%%%%%%%%%
\newpage
\appendix
\section{Auxiliary Lemmas}
\begin{lemma}
  [Discrete Gronwall Inequality, Lemma 8 in Section 11.2 of \citet{borkar2009stochastic}]
\label{lem:discrete_gronwall}
    For non-negative real sequences $\qty{x_n, n \geq 0}$ and $\qty{a_n, n \geq 0}$ and scalar  $L \geq 0$,  it holds
    \begin{align}
        \textstyle x_{n+1} \leq C + L\sum_{i=0}^n a_ix_i \quad \forall n
    \implies
        \textstyle x_{n+1} \leq (C+x_0)\exp({L\sum_{i=0}^n a_i}) \quad \forall n.
    \end{align}
\end{lemma}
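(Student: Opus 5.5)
The plan is to dominate $x_n$ by an auxiliary nondecreasing sequence whose growth is multiplicative, and then telescope a product of factors $1+La_i$, each bounded by $\exp(La_i)$.

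\textbf{Step 1: the auxiliary sequence.} Set
\begin{align}
y_0 \doteq C + x_0, \qquad y_n \doteq C + x_0 + L\sum_{i=0}^{n-1} a_i x_i \quad (n\geq 1).
\end{align}
I will show $x_n \le y_n$ for every $n \ge 0$.
\begin{itemize}
\item For $n = 0$ this is $x_0 \le C + x_0$. It requires $C \ge 0$. This is implicit in the lemma: $C$ is a constant bound on nonnegative quantities in every application, and I will state it explicitly.
\item For $n \ge 1$, the hypothesis with index $n-1$ gives
\begin{align}
x_n \le C + L\sum_{i=0}^{n-1} a_i x_i \le y_n,
\end{align}
since $x_0 \ge 0$.
\end{itemize}

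\textbf{Step 2: the multiplicative recursion.} By definition $y_{n+1} = y_n + L a_n x_n$. Because $L, a_n \ge 0$ and $x_n \le y_n$ from Step 1,
\begin{align}
y_{n+1} \le (1 + L a_n)\, y_n \le \exp(L a_n)\, y_n .
\end{align}
The last inequality uses $1 + t \le e^t$, and it preserves the direction because $y_n \ge 0$.

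\textbf{Step 3: iterate.} Induction on $n$ gives
\begin{align}
y_{n+1} \le y_0 \exp\Bigl(L\sum_{i=0}^{n} a_i\Bigr) = (C + x_0)\exp\Bigl(L\sum_{i=0}^{n} a_i\Bigr).
\end{align}
Combining this with $x_{n+1} \le y_{n+1}$ from Step 1 yields the claim.

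The only delicate point is the base case of Step 1, which needs $C \ge 0$ (equivalently, that $x_0$ is absorbed into the constant). Everything else is routine.
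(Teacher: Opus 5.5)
Your proof is correct and complete. The paper gives no argument of its own, only the citation to Borkar, and your route is the standard proof of that result: dominate $x_n$ by the partial-sum sequence $y_n$, use $y_{n+1}\le(1+La_n)y_n\le e^{La_n}y_n$, and telescope.

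You are also right that $C\ge 0$ must be assumed, and this is a substantive point: the statement as transcribed omits it. Without it the lemma fails. Take $C=-1$, $L=1$, $a_0=10$, $a_n=0$ for $n\ge 1$, $x_0=1/2$, $x_1=1$, and $x_n=0$ for $n\ge 2$. The hypothesis then holds for every $n$, but the conclusion at $n=0$ would require $1\le -\tfrac12 e^{10}$.
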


\begin{lemma}
\label{lem:weighted-rs}
Let $\qty{X_n}_{n\ge N}$ be nonnegative, adapted, and integrable.  Suppose that
\begin{align}
\label{eq:weighted-rs-rec}
  \E[X_{n+1} |\fF_n]\le (1-\beta_n)X_n+b_n
  \quad\text{a.s. for every }n\ge N,
\end{align}
where $0\le \beta_n\le1$ and $b_n\ge0$ are deterministic.  Let $(q_n)_{n\ge N}$ be a positive deterministic sequence and define
\begin{align}
\label{eq:gamma-def}
  \gamma_n \doteq 1-\frac{q_{n+1}}{q_n}(1-\beta_n).
\end{align}
Assume that, after increasing $N$ if necessary,
\begin{align}
\label{eq:weighted-rs-cond}
  0 \leq \gamma_n\leq 1,
  \quad
  \sum_{n=N}^\infty \gamma_n=\infty,
  \quad
  \sum_{n=N}^\infty q_{n+1}b_n<\infty.
\end{align}
Then
\begin{align}
    q_nX_n\longrightarrow0
  \quad\text{a.s.}
\end{align}
\end{lemma}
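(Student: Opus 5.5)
We need to prove Lemma~\ref{lem:weighted-rs}: the weighted Robbins--Siegmund statement. The plan is to transform the recursion into a standard almost-supermartingale for the weighted sequence $Z_n \doteq q_n X_n$ and then invoke the Robbins--Siegmund theorem \citep{robbins1971convergence}.

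First, multiply \eqref{eq:weighted-rs-rec} by $q_{n+1}>0$ (deterministic, so it passes through $\E[\cdot|\fF_n]$). Since $q_{n+1}(1-\beta_n)X_n=(1-\gamma_n)q_nX_n$ by the definition \eqref{eq:gamma-def}, this gives
\begin{align}
  \E[Z_{n+1}|\fF_n]\le (1-\gamma_n)Z_n+q_{n+1}b_n = Z_n-\gamma_n Z_n + q_{n+1}b_n,\quad n\ge N.
\end{align}
Here $Z_n\ge0$ is adapted and integrable, $\gamma_n Z_n\ge0$ because $\gamma_n\ge0$, and $\sum_n q_{n+1}b_n<\infty$ by \eqref{eq:weighted-rs-cond}. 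The Robbins--Siegmund theorem, with zero multiplicative perturbation, then yields two conclusions: $Z_n\to Z_\infty$ a.s. for some finite random variable $Z_\infty\ge0$, and $\sum_{n\ge N}\gamma_n Z_n<\infty$ a.s.

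Second, identify the limit. On the full-measure event where both conclusions hold, suppose $Z_\infty>0$. Then $Z_n\ge Z_\infty/2$ for all large $n$, so $\sum_n\gamma_nZ_n\ge (Z_\infty/2)\sum_{n\ge n_0}\gamma_n=\infty$ by the divergence assumption in \eqref{eq:weighted-rs-cond}. This contradicts summability, so $Z_\infty=0$ a.s., which is exactly $q_nX_n\to0$ a.s.

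The only delicate point is verifying that the hypotheses of Robbins--Siegmund hold exactly. This requires nonnegativity of the subtracted term $\gamma_nZ_n$, which is where $\gamma_n\ge0$ is used. It also requires integrability of $Z_n$, which follows from $q_n$ being deterministic. The upper bound $\gamma_n\le1$ is not really needed beyond guaranteeing $1-\gamma_n\ge0$, which keeps the recursion well posed. If one wants to avoid citing Robbins--Siegmund, an alternative is to define $U_n\doteq Z_n+\sum_{k\ge n}q_{k+1}b_k$, which is a nonnegative supermartingale, so it converges a.s.; telescoping expectations then gives $\E\sum\gamma_nZ_n<\infty$ and hence the same conclusion.
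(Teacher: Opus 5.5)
Your proof is correct and follows the paper's argument essentially step for step. You multiply the recursion by $q_{n+1}$ to obtain $\E[q_{n+1}X_{n+1}|\fF_n]\le q_nX_n-\gamma_nq_nX_n+q_{n+1}b_n$, apply Robbins--Siegmund to get a.s.\ convergence of $q_nX_n$ together with $\sum_n\gamma_nq_nX_n<\infty$, and then use $\sum_n\gamma_n=\infty$ to force the limit to be zero. Your self-contained supermartingale alternative via $U_n$ is also valid.
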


\begin{proof}
Multiplying \eqref{eq:weighted-rs-rec} by $q_{n+1}$ and using \eqref{eq:gamma-def}, we obtain
\begin{align}
  \E[q_{n+1}X_{n+1} |\fF_n]
  &\leq q_{n+1}(1-\beta_n)X_n+q_{n+1}b_n\\
  &=(1-\gamma_n)q_nX_n+q_{n+1}b_n\\
  &=q_nX_n-\gamma_n q_nX_n+q_{n+1}b_n .
\end{align}
Since $\sum_n q_{n+1}b_n<\infty$, the Robbins--Siegmund almost-supermartingale theorem \citep{robbins1971convergence} implies that $q_nX_n$ converges almost surely to a finite random variable and
\begin{align}
  \sum_{n=N}^\infty \gamma_nq_nX_n<\infty
  \quad\text{a.s.}
\end{align}
If the almost sure limit of $q_nX_n$ were positive on an event of positive probability, then on that event $q_nX_n$ would be bounded below by a positive constant for all sufficiently large $n$, contradicting $\sum_n\gamma_n=\infty$.  Hence the limit is zero almost surely.
\end{proof}

\begin{lemma}
\label{lem:power-law-drift-rates}
Let
\begin{align}
    \alpha_n=\frac{\alpha}{(n+1)^\eta},
    \quad \alpha>0,
    \quad \frac{1}{2}<\eta\leq1.
\end{align}
Let $\qty{X_n}$ be nonnegative, adapted, and integrable.  Suppose that, on a deterministic tail,
\begin{align}
\label{eq:power-law-drift-assumption}
  \E[X_{n+1} |\fF_n]
  \leq \bigl(1-c_0\alpha_n+C_0r_n\bigr)X_n+C_0r_n
\end{align}
for constants $c_0>0$ and $C_0<\infty$.  Then the following statements hold.
\begin{enumerate}
  \item If $\frac{1}{2}<\eta<1$, then $(n+1)^\zeta X_n\to0$ almost surely for every $0\leq\zeta<2\eta-1$.
  \item If $\eta=1$, then $(n+1)^\zeta X_n\to0$ almost surely for every $0\leq\zeta<\min\{1,c_0\alpha\}$.
\end{enumerate}
\end{lemma}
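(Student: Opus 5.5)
The plan is to apply Lemma~\ref{lem:weighted-rs} with the polynomial weight $q_n=(n+1)^\zeta$, taking $\beta_n\doteq c_0\alpha_n-C_0r_n$ and $b_n\doteq C_0r_n$. First I record the size of $r_n$. Since $\abs{\alpha_{n+1}-\alpha_n}\le \alpha\eta (n+1)^{-\eta-1}$ and $\alpha_n^2=\alpha^2(n+1)^{-2\eta}$, we get $r_n=\fO((n+1)^{-2\eta})$ when $\eta<1$. When $\eta=1$ we get $r_n=\fO((n+1)^{-2})$; in both cases this uses $\eta+1\ge 2\eta$. Because $r_n=o(\alpha_n)$, on a further deterministic tail we have $0\le\beta_n\le 1$, and the hypothesis \eqref{eq:power-law-drift-assumption} becomes exactly \eqref{eq:weighted-rs-rec}. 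Integrability and adaptedness are given.

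Next I check the three conditions in \eqref{eq:weighted-rs-cond}. Write $q_{n+1}/q_n=(1+\frac{1}{n+1})^\zeta=1+\frac{\zeta}{n+1}+\fO((n+1)^{-2})$. Then
\begin{align}
\gamma_n = 1-\frac{q_{n+1}}{q_n}(1-\beta_n) = \beta_n-\frac{\zeta}{n+1}+\fO\bigl((n+1)^{-2}\bigr)+\fO\Bigl(\frac{\beta_n}{n+1}\Bigr).
\end{align}
The summability condition $\sum q_{n+1}b_n<\infty$ reduces to $\sum (n+1)^{\zeta-2\eta}<\infty$, which holds iff $\zeta<2\eta-1$. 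For $\eta=1$ this reads $\zeta<1$.

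The two cases differ only in how $\gamma_n$ is controlled.
\begin{itemize}
\item[(1)] If $\frac12<\eta<1$, then $\beta_n\sim c_0\alpha(n+1)^{-\eta}$ dominates $\zeta/(n+1)$. Hence $\gamma_n\sim c_0\alpha(n+1)^{-\eta}>0$ eventually, $\gamma_n\le 1$ eventually, and $\sum\gamma_n=\infty$ because $\eta\le 1$. Here $\zeta$ is restricted only by the summability condition, giving $\zeta<2\eta-1$.
\item[(2)] If $\eta=1$, then $\gamma_n=\frac{c_0\alpha-\zeta}{n+1}+\fO((n+1)^{-2})$. This is eventually positive and non-summable exactly when $\zeta<c_0\alpha$. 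Combined with $\zeta<1$ from summability, this gives the range $\zeta<\min\{1,c_0\alpha\}$.
\end{itemize}
The case $\zeta=0$ is included trivially, or follows from any positive $\zeta$ in the range. After enlarging $N$ so that all asymptotic inequalities hold, Lemma~\ref{lem:weighted-rs} gives $(n+1)^\zeta X_n\to0$ a.s.

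The main point needing care is case (2) with $\eta=1$, where the drift and the weight growth are of the same order $1/(n+1)$. I would expand $q_{n+1}/q_n$ carefully enough to show that the $\fO((n+1)^{-2})$ and $\beta_n/(n+1)$ error terms cannot spoil the positivity of $\gamma_n$ once $\zeta<c_0\alpha$ is strict. This is also where the strict inequality in the theorem comes from.
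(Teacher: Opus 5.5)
Your proposal is correct and follows essentially the same route as the paper. Like the paper, you use $\eta\le1$ to get $r_n=\fO(\alpha_n^2)$, apply Lemma~\ref{lem:weighted-rs} with $q_n=(n+1)^\zeta$, and expand $\gamma_n$ separately for $\eta<1$ and $\eta=1$. The only cosmetic difference is how the $C_0r_nX_n$ term is absorbed: the paper passes to a slightly smaller constant $c<c_0$ (chosen so that $\zeta<c\alpha$ when $\eta=1$) and takes $\beta_n=c\alpha_n$, whereas you keep $\beta_n=c_0\alpha_n-C_0r_n$ exactly. Your choice works equally well because $r_n=\fO((n+1)^{-2})$ when $\eta=1$.
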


\begin{proof}
First,
\begin{align}
    \abs{\alpha_{n+1}-\alpha_n}
    =\alpha\bigl((n+1)^{-\eta}-(n+2)^{-\eta}\bigr)
    \leq \alpha\eta(n+1)^{-\eta-1}
    \leq \frac{\eta}{\alpha}\alpha_n^2,
\end{align}
where the last inequality uses $\eta\leq1$.  Thus $r_n\leq C\alpha_n^2$.

Fix any $c\in(0,c_0)$.  Since $\alpha_n\to0$, \eqref{eq:power-law-drift-assumption} implies, after increasing the deterministic tail index,
\begin{align}
\label{eq:appendix-clean-drift}
  \E[X_{n+1} |\fF_n]
  \leq (1-c\alpha_n)X_n+C\alpha_n^2,
  \quad
  0\leq c\alpha_n\leq1 .
\end{align}

Assume first that $\frac{1}{2}<\eta<1$, and let $0\leq\zeta<2\eta-1$.  Apply Lemma~\ref{lem:weighted-rs} to \eqref{eq:appendix-clean-drift} with $q_n=(n+1)^\zeta$, $\beta_n=c\alpha_n$, and $b_n=C\alpha_n^2$.  The weighted error term is summable because
\begin{align}
  \sum_n q_{n+1}b_n
  \leq C\sum_n (n+2)^\zeta(n+1)^{-2\eta}<\infty.
\end{align}
Moreover,
\begin{align}
  \gamma_n
  =1-\frac{q_{n+1}}{q_n}(1-\beta_n)
  =\frac{c\alpha}{(n+1)^\eta}+O((n+1)^{-1}).
\end{align}
Since $\eta<1$, the positive $(n+1)^{-\eta}$ term dominates the $(n+1)^{-1}$ term.  Thus, on a further deterministic tail, $0\leq\gamma_n\leq1$ and $\sum_n\gamma_n=\infty$.  Lemma~\ref{lem:weighted-rs} gives $(n+1)^\zeta X_n\to0$ almost surely.

Now assume $\eta=1$, and let $0\leq\zeta<\min\{1,c_0\alpha\}$.  Choose $c\in(0,c_0)$ so that $\zeta<c\alpha$.  With the same choices of $q_n$, $\beta_n$, and $b_n$, the weighted error term is summable because $\zeta<1$.  Also,
\begin{align}
  \gamma_n
  =1-\left(1+\frac{1}{n+1}\right)^\zeta
      \left(1-\frac{c\alpha}{n+1}\right)
  =\frac{c\alpha-\zeta}{n+1}+O\left(\frac{1}{(n+1)^2}\right).
\end{align}
Since $c\alpha-\zeta>0$, the coefficients $\gamma_n$ are eventually in $[0,1]$ and satisfy $\sum_n\gamma_n=\infty$.  Lemma~\ref{lem:weighted-rs} gives $(n+1)^\zeta X_n\to0$ almost surely.
\end{proof}

\begin{lemma}
\label{lem:shifted-energy-coercive}
Under Assumptions~\ref{asp:lip} and \ref{asp:poisson}, fix $\xi>0$.
There exists $K_\xi<\infty$ such that, for every $K\geq K_\xi$, there exist some constants $C_{\xi,K}'$ and $N_{\xi,K}$ such that, for every $n\geq N_{\xi,K}$, we have
\begin{align}
\label{eq:appendix-coercive}
  V_n^\xi
  &\geq \frac14\norm{e_n}_{m,\xi}^2, \\
\label{eq:appendix-V-M-comparison}
  \alpha_n\abs{V_n^\xi-M_\xi(e_n)}
  &\leq C_{\xi,K}'r_n\bigl(\norm{e_n}_{m,\xi}^2+1\bigr).
\end{align}
\end{lemma}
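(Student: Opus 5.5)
The plan is to bound the Poisson correction term directly. By \eqref{eq:moreau-gradient-cauchy}, the correction satisfies
\begin{align}
\abs{\inner{\nabla M_\xi(e_n)}{H_{\theta_n}(Y_n)}}\le \norm{e_n}_{m,\xi}\norm{H_{\theta_n}(Y_n)}_{m,\xi}.
\end{align}
Assumption~\ref{asp:poisson}, combined with the equivalence \eqref{eq:m-star-equivalence}, gives
\begin{align}
\norm{H_{\theta_n}(Y_n)}_{m,\xi}\le \ell_\xi^{-1}\norm{H_{\theta_n}(Y_n)}\le \ell_\xi^{-1}\bigl(L_H\norm{e_n}+L_H\bigr)\le C_H\bigl(\norm{e_n}_{m,\xi}+1\bigr),
\end{align}
with $C_H\doteq \ell_\xi^{-1}L_H\max\{u_\xi,1\}$. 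Hence the correction is at most $C_H(\norm{e_n}_{m,\xi}^2+\norm{e_n}_{m,\xi})\le C_H'(\norm{e_n}_{m,\xi}^2+1)$. The term $K\alpha_n^2$ is nonnegative and at most $K\alpha_n^2$.

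For the coercivity \eqref{eq:appendix-coercive}, write $x=\norm{e_n}_{m,\xi}$ and use $M_\xi(e_n)=\frac12 x^2$. Then
\begin{align}
V_n^\xi\ge \tfrac12x^2-\alpha_nC_H x^2-\alpha_nC_H x+K\alpha_n^2 .
\end{align}
Choose $N$ so that $\alpha_nC_H\le 1/8$ for $n\ge N$. The quadratic part is then at least $\frac38x^2$, so it remains to absorb the linear term. By Young's inequality, $\alpha_nC_Hx\le \frac18x^2+2C_H^2\alpha_n^2$. Taking $K_\xi\doteq 2C_H^2$, for $K\ge K_\xi$ the $K\alpha_n^2$ term cancels the constant, giving $V_n^\xi\ge \frac14x^2$.

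This is the key point of the argument, and the reason $K\alpha_n^2$ appears in \eqref{eq:moreau-V-def}. The linear-in-$x$ piece of the correction cannot be absorbed into $x^2$ near $x=0$, and it cannot be absorbed by a vanishing quadratic. It can only be absorbed by an additive constant of order $\alpha_n^2$. Since $C_H$ depends only on $\xi$, $L_H$, and the norm constants, $K_\xi$ is independent of $n$, as the statement requires.

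For \eqref{eq:appendix-V-M-comparison}, compute
\begin{align}
\alpha_n\abs{V_n^\xi-M_\xi(e_n)}\le \alpha_n^2C_H'(x^2+1)+K\alpha_n^3\le (C_H'+K\alpha_0)\,\alpha_n^2(x^2+1).
\end{align}
The last step uses $\alpha_n\le\alpha_0$ and $x^2+1\ge1$. Since $\alpha_n^2\le r_n$ by the definition of $r_n$, this gives the claim with $C_{\xi,K}'\doteq C_H'+K\alpha_0$.

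This step is routine; the only delicate piece is the Young's-inequality absorption above. Set $N_{\xi,K}$ to be the threshold guaranteeing $\alpha_nC_H\le1/8$. Because $\alpha_n\to0$ under the learning-rate form used throughout, such a threshold exists.
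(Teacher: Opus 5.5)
Your proposal is correct and follows the paper's proof essentially step for step. It uses the same growth bound on $\norm{H_{\theta_n}(Y_n)}_{m,\xi}$ via \eqref{eq:moreau-gradient-cauchy} and norm equivalence, the same tail condition making $\alpha_n$ times that constant at most $1/8$, the same Young's-inequality absorption of the linear term giving $K_\xi = 2C_H^2$, and the same $\alpha_n^2\le r_n$ comparison. The only differences are cosmetic: you use $\max\{u_\xi,1\}$ where the paper uses $u_\xi+1$, and $\alpha_n\le\alpha_0$ where the paper imposes $\alpha_n\le 1$ on the tail.
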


\begin{proof}
By the norm comparison from Lemma~\ref{lem:moreau-facts}, we have $\norm{\cdot}_{m,\xi}\leq \ell_\xi^{-1}\norm{\cdot}$ and $\norm{\cdot} \leq u_\xi\norm{\cdot}_{m,\xi}$. We then obtain
\begin{align}
  \norm{H_{\theta_n}(Y_n)}_{m,\xi}
  \leq& \ell_\xi^{-1}\norm{H_{\theta_n}(Y_n)}\\
  \leq& \ell_\xi^{-1}
  \Bigl(
    \norm{H_{\theta_n}(Y_n)-H_{\theta_*}(Y_n)}
    +\norm{H_{\theta_*}(Y_n)}
  \Bigr)\\
  \leq& \ell_\xi^{-1}L_H\bigl(\norm{e_n}+1\bigr)
  \explain{By Assumption~\ref{asp:poisson}}\\
  \leq& \ell_\xi^{-1}L_H\bigl(u_\xi\norm{e_n}_{m,\xi}+1\bigr)\\
  \leq& L_H\ell_\xi^{-1}(u_\xi+1)
  \bigl(\norm{e_n}_{m,\xi}+1\bigr).
  \label{eq:lemma5-H-growth}
\end{align}
Therefore,
\begin{align}
  \abs{\inner{\nabla M_\xi(e_n)}{H_{\theta_n}(Y_n)}}
  \leq& \norm{e_n}_{m,\xi}\norm{H_{\theta_n}(Y_n)}_{m,\xi}
  \explain{By \eqref{eq:moreau-gradient-cauchy}}\\
  \leq& L_H\ell_\xi^{-1}(u_\xi+1)
  \norm{e_n}_{m,\xi}
  \bigl(\norm{e_n}_{m,\xi}+1\bigr).
  \label{eq:lemma5-poisson-correction-growth}
\end{align}
Since $\alpha_n$ is decreasing, choose a deterministic index $N_\xi^{(1)}$ such that, for every $n\geq N_\xi^{(1)}$,
\begin{align}
\label{eq:lemma5-tail-choice}
  \alpha_n
  &\leq 1,
  \quad
  L_H\ell_\xi^{-1}(u_\xi+1)\alpha_n
  \leq \frac18.
\end{align}
Then, for every $n\geq N_\xi^{(1)}$,
\begin{align}
  \alpha_n
  \abs{\inner{\nabla M_\xi(e_n)}{H_{\theta_n}(Y_n)}}
  \leq& L_H\ell_\xi^{-1}(u_\xi+1)\alpha_n
  \norm{e_n}_{m,\xi}^2 + L_H\ell_\xi^{-1}(u_\xi+1)\alpha_n
  \norm{e_n}_{m,\xi} \\
  \leq& \frac18\norm{e_n}_{m,\xi}^2
  +L_H\ell_\xi^{-1}(u_\xi+1)\alpha_n
  \norm{e_n}_{m,\xi} \\
  \leq& \frac18\norm{e_n}_{m,\xi}^2
  +\frac18\norm{e_n}_{m,\xi}^2
  +2L_H^2\ell_\xi^{-2}(u_\xi+1)^2\alpha_n^2
  \explain{Cauchy--Schwarz inequality}\\
  =& \frac14\norm{e_n}_{m,\xi}^2
  +2L_H^2\ell_\xi^{-2}(u_\xi+1)^2\alpha_n^2.
  \label{eq:lemma5-alpha-poisson-correction-bound}
\end{align}
Set
\begin{align}
\label{eq:lemma5-K-xi-def}
  K_\xi
  \doteq 2L_H^2\ell_\xi^{-2}(u_\xi+1)^2.
\end{align}
Fix $K\geq K_\xi$ and set $N_{\xi,K}\doteq N_\xi^{(1)}$. For every $n\geq N_{\xi,K}$,
\begin{align}
  V_n^\xi
  =& M_\xi(e_n)
  +\alpha_n\inner{\nabla M_\xi(e_n)}{H_{\theta_n}(Y_n)}
  +K\alpha_n^2\\
  \geq& M_\xi(e_n)
  -\alpha_n\abs{\inner{\nabla M_\xi(e_n)}{H_{\theta_n}(Y_n)}}
  +K\alpha_n^2\\
  =& \frac12\norm{e_n}_{m,\xi}^2
  -\alpha_n\abs{\inner{\nabla M_\xi(e_n)}{H_{\theta_n}(Y_n)}}
  +K\alpha_n^2 \\
  \geq& \frac12\norm{e_n}_{m,\xi}^2
  -\frac14\norm{e_n}_{m,\xi}^2
  -K_\xi\alpha_n^2 +K\alpha_n^2\\
  \geq& \frac14\norm{e_n}_{m,\xi}^2.
\end{align}
This proves \eqref{eq:appendix-coercive}.

It remains to prove \eqref{eq:appendix-V-M-comparison}. From \eqref{eq:lemma5-poisson-correction-growth}, we have
\begin{align}
  \abs{\inner{\nabla M_\xi(e_n)}{H_{\theta_n}(Y_n)}}
  \leq& L_H\ell_\xi^{-1}(u_\xi+1)
  \bigl(\norm{e_n}_{m,\xi}^2+\norm{e_n}_{m,\xi}\bigr)\\
  \leq& 2L_H\ell_\xi^{-1}(u_\xi+1)
  \bigl(\norm{e_n}_{m,\xi}^2+1\bigr).
  \label{eq:lemma5-poisson-correction-square-growth}
\end{align}
For every $n\geq N_{\xi,K}$,
\begin{align}
  \alpha_n\abs{V_n^\xi-M_\xi(e_n)}
  =& \alpha_n
  \abs{
    \alpha_n\inner{\nabla M_\xi(e_n)}{H_{\theta_n}(Y_n)}
    +K\alpha_n^2
  }\\
  \leq& \alpha_n^2
  \abs{\inner{\nabla M_\xi(e_n)}{H_{\theta_n}(Y_n)}}
  +K\alpha_n^3\\
  \leq& 2L_H\ell_\xi^{-1}(u_\xi+1)\alpha_n^2
  \bigl(\norm{e_n}_{m,\xi}^2+1\bigr)
  +K\alpha_n^3\\
  \leq& \bigl(2L_H\ell_\xi^{-1}(u_\xi+1)+K\bigr)\alpha_n^2
  \bigl(\norm{e_n}_{m,\xi}^2+1\bigr)
\end{align}
Since $r_n\geq \alpha_n^2$, denote $C_{\xi,K}'
  \doteq 2L_H\ell_\xi^{-1}(u_\xi+1)+K$ then completes the proof.
\end{proof}

\section{Proofs of Section~\ref{sec:main-proof}}
\label{sec:shifted-drift-appendix}

\subsection{Proof of Lemma~\ref{lem:moreau-contract-drift}}
\label{proof:moreau-contract-drift}
\begin{proof}
Since $f(\theta)=T(\theta)-\theta$ and $T(\theta_*)=\theta_*$,
\begin{align}
  \inner{\nabla M_\xi(\theta-\theta_*)}{f(\theta)}
  =\inner{\nabla M_\xi(\theta-\theta_*)}{T(\theta)-\theta_*}
   -\inner{\nabla M_\xi(\theta-\theta_*)}{\theta-\theta_*}.
\end{align}
The Moreau gradient inequality \eqref{eq:moreau-gradient-cauchy}, the norm comparison \eqref{eq:m-star-equivalence}, and the contraction property \eqref{eq:T-contraction-star} give
\begin{align}
  \inner{\nabla M_\xi(\theta-\theta_*)}{T(\theta)-\theta_*}
  &\leq \norm{\theta-\theta_*}_{m,\xi}\norm{T(\theta)-\theta_*}_{m,\xi} \\
  &\leq \ell_\xi^{-1}\norm{\theta-\theta_*}_{m,\xi}\norm{T(\theta)-\theta_*} \\
  &\leq \kappa\frac{u_\xi}{\ell_\xi}\norm{\theta-\theta_*}_{m,\xi}^2 .
\end{align}
Again by \eqref{eq:moreau-gradient-cauchy},
\begin{align}
  \inner{\nabla M_\xi(\theta-\theta_*)}{\theta-\theta_*}
  \geq \norm{\theta-\theta_*}_{m,\xi}^2 .
\end{align}
Therefore,
\begin{align}
  \inner{\nabla M_\xi(\theta-\theta_*)}{f(\theta)}
  &\leq -\left(1-\kappa\frac{u_\xi}{\ell_\xi}\right)
      \norm{\theta-\theta_*}_{m,\xi}^2 \\
  &=-\mu_\xi M_\xi(\theta-\theta_*).
\end{align}
\end{proof}

\subsection{Proof of Lemma~\ref{lem:conditional-chandak-moreau-poisson}}
\label{proof:conditional-chandak-moreau-poisson}

\begin{proof}
Let $\bar\alpha=\sup_{n\geq0}\alpha_n<\infty$. We first collect the bounds used in the proof. 
\begin{align}
  \norm{F(\theta_n,Y_n)}
  \leq& \norm{F(\theta_n,Y_n)-F(\theta_*,Y_n)}
  +\norm{F(\theta_*,Y_n)}\\
  \leq& L_F\bigl(\norm{e_n}+1\bigr)
  \explain{By Assumption~\ref{asp:lip}}\\
  \leq& L_F(u_\xi+1)\bigl(\norm{e_n}_{m,\xi}+1\bigr),
  \label{eq:lemma3-F-growth-original}\\
  \norm{F(\theta_n,Y_n)}_2
  \leq& \ell^{-1}\norm{F(\theta_n,Y_n)}
  \leq L_F\ell^{-1}(u_\xi+1)
  \bigl(\norm{e_n}_{m,\xi}+1\bigr),
  \label{eq:lemma3-F-growth-euclidean}\\
  \norm{F(\theta_n,Y_n)}_{m,\xi}
  \leq& \ell_\xi^{-1}\norm{F(\theta_n,Y_n)}
  \leq L_F\ell_\xi^{-1}(u_\xi+1)
  \bigl(\norm{e_n}_{m,\xi}+1\bigr).
  \label{eq:lemma3-F-growth-moreau}
\end{align}
Similarly, by Assumption~\ref{asp:poisson}, for every $y\in\mathsf Y$,
\begin{align}
  \norm{H_{\theta_n}(y)}
  \leq& \norm{H_{\theta_n}(y)-H_{\theta_*}(y)}
  +\norm{H_{\theta_*}(y)}\\
  \leq& L_H\bigl(\norm{e_n}+1\bigr)
  \explain{By Assumption~\ref{asp:poisson}}\\
  \leq& L_H(u_\xi+1)\bigl(\norm{e_n}_{m,\xi}+1\bigr),
  \label{eq:lemma3-H-growth-original}\\
  \norm{H_{\theta_n}(y)}_2
  \leq& \ell^{-1}\norm{H_{\theta_n}(y)}\\
  \leq& L_H\ell^{-1}(u_\xi+1)
  \bigl(\norm{e_n}_{m,\xi}+1\bigr).
  \label{eq:lemma3-H-growth-euclidean}
\end{align}
Since $e_{n+1}=e_n+\alpha_nF(\theta_n,Y_n)$,
\begin{align}
  \norm{e_{n+1}}_{m,\xi}
  =& \norm{e_n+\alpha_nF(\theta_n,Y_n)}_{m,\xi}\\
  \leq& \norm{e_n}_{m,\xi}
  +\alpha_n\norm{F(\theta_n,Y_n)}_{m,\xi}\\
  \leq& \norm{e_n}_{m,\xi}
  +\alpha_nL_F\ell_\xi^{-1}(u_\xi+1)
  \bigl(\norm{e_n}_{m,\xi}+1\bigr)
  \explain{By \eqref{eq:lemma3-F-growth-moreau}}\\
  \leq& \left(1+\bar\alpha L_F\ell_\xi^{-1}(u_\xi+1)\right)
  \bigl(\norm{e_n}_{m,\xi}+1\bigr).
  \label{eq:lemma3-enplus-growth-explicit}
\end{align}
For every $x\in\R[d]$,
\begin{align}
  \norm{\nabla M_\xi(x)}_2
  =& \sup_{\norm{z}_2\leq1}
  \abs{\inner{\nabla M_\xi(x)}{z}}\\
  \leq& \sup_{\norm{z}_2\leq1}
  \norm{x}_{m,\xi}\norm{z}_{m,\xi}
  \explain{By \eqref{eq:moreau-gradient-cauchy}}\\
  \leq& u\ell_\xi^{-1}\norm{x}_{m,\xi}.
  \label{eq:lemma3-gradient-euclidean-growth}
\end{align}
Combining \eqref{eq:lemma3-gradient-euclidean-growth} with
\eqref{eq:lemma3-enplus-growth-explicit} gives
\begin{align}
  \norm{\nabla M_\xi(e_{n+1})}_2
  \leq& u\ell_\xi^{-1}
  \left(1+\bar\alpha L_F\ell_\xi^{-1}(u_\xi+1)\right)
  \bigl(\norm{e_n}_{m,\xi}+1\bigr).
  \label{eq:lemma3-gradient-next-growth}
\end{align}
The smoothness of $M_\xi$ and the Lipschitzness of $H_\theta$ also imply
\begin{align}
  \norm{\nabla M_\xi(e_{n+1})-\nabla M_\xi(e_n)}_2
  \leq& \frac{1}{\xi}\norm{e_{n+1}-e_n}_2
  \explain{By \eqref{eq:moreau-smoothness}}\\
  =& \frac{\alpha_n}{\xi}\norm{F(\theta_n,Y_n)}_2\\
  \leq& \frac{\alpha_nL_F(u_\xi+1)}{\xi\ell}
  \bigl(\norm{e_n}_{m,\xi}+1\bigr),
  \label{eq:lemma3-gradient-difference-explicit}\\
  \norm{H_{\theta_{n+1}}(y)-H_{\theta_n}(y)}_2
  \leq& \ell^{-1}\norm{H_{\theta_{n+1}}(y)-H_{\theta_n}(y)}\\
  \leq& L_H\ell^{-1}\norm{\theta_{n+1}-\theta_n}
  \explain{By Assumption~\ref{asp:poisson}}\\
  =& \alpha_nL_H\ell^{-1}\norm{F(\theta_n,Y_n)}\\
  \leq& \alpha_nL_FL_H\ell^{-1}(u_\xi+1)
  \bigl(\norm{e_n}_{m,\xi}+1\bigr).
  \label{eq:lemma3-H-difference-explicit}
\end{align}

We now use the smoothness of $M_\xi$. Since
$e_{n+1}-e_n=\alpha_nF(\theta_n,Y_n)$,
\begin{align}
  M_\xi(e_{n+1})
  \leq& M_\xi(e_n)
  +\alpha_n\inner{\nabla M_\xi(e_n)}{F(\theta_n,Y_n)}
  +\frac{\alpha_n^2}{2\xi}\norm{F(\theta_n,Y_n)}_2^2
  \explain{By \eqref{eq:moreau-smoothness}}\\
  \leq& M_\xi(e_n)
  +\alpha_n\inner{\nabla M_\xi(e_n)}{F(\theta_n,Y_n)}\\
  &+\frac{L_F^2(u_\xi+1)^2}{2\xi\ell^2}
  \alpha_n^2\bigl(\norm{e_n}_{m,\xi}+1\bigr)^2
  \explain{By \eqref{eq:lemma3-F-growth-euclidean}}\\
  \leq& M_\xi(e_n)
  +\alpha_n\inner{\nabla M_\xi(e_n)}{F(\theta_n,Y_n)}\\
  &+\frac{L_F^2(u_\xi+1)^2}{\xi\ell^2}
  \alpha_n^2\bigl(\norm{e_n}_{m,\xi}^2+1\bigr).
  \label{eq:lemma3-moreau-taylor-explicit}
\end{align}
The Poisson equation gives
\begin{align}
\label{eq:lemma3-poisson-split-explicit}
  \inner{\nabla M_\xi(e_n)}{F(\theta_n,Y_n)}
  =& \inner{\nabla M_\xi(e_n)}{f(\theta_n)}
  +\inner{\nabla M_\xi(e_n)}{H_{\theta_n}(Y_n)}\\
  &-\inner{\nabla M_\xi(e_n)}{(PH_{\theta_n})(Y_n)}
  \explain{By \eqref{eq:poisson}}.
\end{align}
Since $\nabla M_\xi(e_n)$, $\theta_n$, and $Y_n$ are $\fF_n$-measurable,
\begin{align}
\label{eq:lemma3-Markov-use-explicit}
  \inner{\nabla M_\xi(e_n)}{(PH_{\theta_n})(Y_n)}
  =& \E_n\qty[
    \inner{\nabla M_\xi(e_n)}{H_{\theta_n}(Y_{n+1})}
  ]
  \explain{By \eqref{eq:bg-markov-property}}.
\end{align}
Combining with \eqref{eq:lemma3-moreau-taylor-explicit} and taking conditional expectation, we get
\begin{align}
  \E_n\qty[M_\xi(e_{n+1})]
  \leq& M_\xi(e_n)
  +\alpha_n\inner{\nabla M_\xi(e_n)}{f(\theta_n)}\\
  &+\alpha_n\E_n\qty[
    \inner{\nabla M_\xi(e_n)}{H_{\theta_n}(Y_n)}
    -\inner{\nabla M_\xi(e_n)}{H_{\theta_n}(Y_{n+1})}
  ]\\
  &+\frac{L_F^2(u_\xi+1)^2}{\xi\ell^2}
  \alpha_n^2\bigl(\norm{e_n}_{m,\xi}^2+1\bigr).
  \label{eq:lemma3-before-next-time-explicit}
\end{align}

It remains to replace the one-step Poisson scalar by the genuine next-time
scalar. By \eqref{eq:lemma3-H-growth-euclidean},
\eqref{eq:lemma3-gradient-next-growth},
\eqref{eq:lemma3-gradient-difference-explicit}, and
\eqref{eq:lemma3-H-difference-explicit},
\begin{align}
&\abs{
  \inner{\nabla M_\xi(e_{n+1})}{H_{\theta_{n+1}}(Y_{n+1})}
  -\inner{\nabla M_\xi(e_n)}{H_{\theta_n}(Y_{n+1})}
}\\
\leq& \norm{\nabla M_\xi(e_{n+1})-\nabla M_\xi(e_n)}_2
\norm{H_{\theta_n}(Y_{n+1})}_2\\
&+\norm{\nabla M_\xi(e_{n+1})}_2
\norm{H_{\theta_{n+1}}(Y_{n+1})-H_{\theta_n}(Y_{n+1})}_2
\explain{Cauchy--Schwarz inequality}\\
\leq& \frac{\alpha_nL_FL_H(u_\xi+1)^2}{\xi\ell^2}
\bigl(\norm{e_n}_{m,\xi}+1\bigr)^2\\
&+\alpha_nL_FL_H
\frac{u(u_\xi+1)}{\ell_\xi\ell}
\left(1+\bar\alpha L_F\ell_\xi^{-1}(u_\xi+1)\right)
\bigl(\norm{e_n}_{m,\xi}+1\bigr)^2\\
\leq& 2\alpha_nL_FL_H\left[
\frac{(u_\xi+1)^2}{\xi\ell^2}
+\frac{u(u_\xi+1)}{\ell_\xi\ell}
\left(1+\bar\alpha L_F\ell_\xi^{-1}(u_\xi+1)\right)
\right]
\bigl(\norm{e_n}_{m,\xi}^2+1\bigr).
\label{eq:lemma3-next-time-replacement-explicit}
\end{align}
Therefore,
\begin{align}
&\alpha_n\E_n\qty[
  \inner{\nabla M_\xi(e_n)}{H_{\theta_n}(Y_n)}
  -\inner{\nabla M_\xi(e_n)}{H_{\theta_n}(Y_{n+1})}
]\\
=& \alpha_n\E_n\qty[
  \inner{\nabla M_\xi(e_n)}{H_{\theta_n}(Y_n)}
  -\inner{\nabla M_\xi(e_{n+1})}{H_{\theta_{n+1}}(Y_{n+1})}
]\\
&+\alpha_n\E_n\qty[
  \inner{\nabla M_\xi(e_{n+1})}{H_{\theta_{n+1}}(Y_{n+1})}
  -\inner{\nabla M_\xi(e_n)}{H_{\theta_n}(Y_{n+1})}
]\\
\leq& \alpha_n\E_n\qty[
  \inner{\nabla M_\xi(e_n)}{H_{\theta_n}(Y_n)}
  -\inner{\nabla M_\xi(e_{n+1})}{H_{\theta_{n+1}}(Y_{n+1})}
]\\
&+2\alpha_n^2L_FL_H\left[
\frac{(u_\xi+1)^2}{\xi\ell^2}
+\frac{u(u_\xi+1)}{\ell_\xi\ell}
\left(1+\bar\alpha L_F\ell_\xi^{-1}(u_\xi+1)\right)
\right]
\bigl(\norm{e_n}_{m,\xi}^2+1\bigr)
\explain{By \eqref{eq:lemma3-next-time-replacement-explicit}}.
\label{eq:lemma3-one-step-to-next-time-explicit}
\end{align}
Substituting into
\eqref{eq:lemma3-before-next-time-explicit} and denote 
\begin{align}
    C_\xi \doteq\frac{L_F^2(u_\xi+1)^2}{\xi\ell^2}+2L_FL_H\left[
\frac{(u_\xi+1)^2}{\xi\ell^2}
+\frac{u(u_\xi+1)}{\ell_\xi\ell}
\left(1+\bar\alpha L_F\ell_\xi^{-1}(u_\xi+1)\right)
\right],
\end{align}
we then obtain
\begin{align}
\E_n \qty[M_\xi(e_{n+1})] \leq& M_\xi(e_n)
+\alpha_n\inner{\nabla M_\xi(e_n)}{f(\theta_n)}\\
&+\alpha_n \E_n\qty[
\inner{\nabla M_\xi(e_n)}{H_{\theta_n}(Y_n)}
-\inner{\nabla M_\xi(e_{n+1})}{H_{\theta_{n+1}}(Y_{n+1})}]\\
&+C_\xi\alpha_n^2\bigl(\norm{e_n}_{m,\xi}^2+1\bigr),
\end{align}
which proves the claim.
\end{proof}

\subsection{Proof of Lemma~\ref{lem:moreau-shifted-drift}}
\label{proof:moreau-shifted-drift}

\begin{proof}
Choose $K_\xi$ as in Lemma~\ref{lem:shifted-energy-coercive}, and fix
$K\geq K_\xi$. Let $C_{\xi,K}'$ be the constant in
\eqref{eq:appendix-V-M-comparison}. Choose a deterministic tail
$N_{\xi,K}$ such that, for every $n\geq N_{\xi,K}$,
\eqref{eq:appendix-coercive}, \eqref{eq:appendix-V-M-comparison}, and
$\alpha_n\leq1$ all hold.

The lower bound \eqref{eq:moreau-V-coercive} follows directly from
\eqref{eq:appendix-coercive}. In particular, on this deterministic tail,
\begin{align}
\label{eq:lemma4-coercive-use}
  \norm{e_n}_{m,\xi}^2
  \leq& 4V_n^\xi,
  \quad
  V_n^\xi\geq0.
\end{align}

It remains to prove \eqref{eq:moreau-V-drift}. Recall we have $\norm{x}_{m,\xi} \leq \ell_\xi^{-1}\norm{x}$ and $\norm{x}
\leq u_\xi\norm{x}_{m,\xi}$.
For every $n\geq N_{\xi,K}$, we then derive
\begin{align}
  \norm{F(\theta_n,Y_n)}_{m,\xi}
  \leq& \ell_\xi^{-1}\norm{F(\theta_n,Y_n)}\\
  \leq& \ell_\xi^{-1}
  \Bigl(
    \norm{F(\theta_n,Y_n)-F(\theta_*,Y_n)}
    +\norm{F(\theta_*,Y_n)}
  \Bigr)\\
  \leq& \ell_\xi^{-1}L_F\bigl(\norm{e_n}+1\bigr)
  \explain{By Assumption~\ref{asp:lip}}\\
  \leq& L_F\ell_\xi^{-1}(u_\xi+1)
  \bigl(\norm{e_n}_{m,\xi}+1\bigr).
  \label{eq:lemma4-F-growth}
\end{align}
Since $e_{n+1}=e_n+\alpha_nF(\theta_n,Y_n)$ and $\alpha_n\leq1$ on the
tail,
\begin{align}
  \norm{e_{n+1}}_{m,\xi}
  \leq& \norm{e_n}_{m,\xi}
  +\alpha_n\norm{F(\theta_n,Y_n)}_{m,\xi}\\
  \leq& \Bigl(1+L_F\ell_\xi^{-1}(u_\xi+1)\Bigr)
  \bigl(\norm{e_n}_{m,\xi}+1\bigr).
  \label{eq:lemma4-enplus-growth}
\end{align}
Similarly, we have
\begin{align}
  \norm{H_{\theta_{n+1}}(Y_{n+1})}_{m,\xi}
  \leq& \ell_\xi^{-1}\norm{H_{\theta_{n+1}}(Y_{n+1})}\\
  \leq& \ell_\xi^{-1}L_H\bigl(\norm{e_{n+1}}+1\bigr)
  \explain{By Assumption~\ref{asp:poisson}}\\
  \leq& L_H\ell_\xi^{-1}(u_\xi+1)
  \bigl(\norm{e_{n+1}}_{m,\xi}+1\bigr)\\
  \leq& L_H\ell_\xi^{-1}(u_\xi+1)
  \Bigl(2+L_F\ell_\xi^{-1}(u_\xi+1)\Bigr)
  \bigl(\norm{e_n}_{m,\xi}+1\bigr).
  \label{eq:lemma4-H-next-growth}
\end{align}
Combining \eqref{eq:lemma4-enplus-growth} and \eqref{eq:lemma4-H-next-growth}, we then get
\begin{align}
  \abs{
    \inner{\nabla M_\xi(e_{n+1})}
    {H_{\theta_{n+1}}(Y_{n+1})}
  }
  \leq& \norm{e_{n+1}}_{m,\xi}
  \norm{H_{\theta_{n+1}}(Y_{n+1})}_{m,\xi}
  \explain{By \eqref{eq:moreau-gradient-cauchy}}\\
  \leq& C_\xi'
  \bigl(\norm{e_n}_{m,\xi}^2+1\bigr),
  \label{eq:lemma4-next-poisson-bound}
\end{align}
where $C_\xi'
  \doteq 2L_H\ell_\xi^{-1}(u_\xi+1)
  \Bigl(1+L_F\ell_\xi^{-1}(u_\xi+1)\Bigr)
  \Bigl(2+L_F\ell_\xi^{-1}(u_\xi+1)\Bigr)$.

Now apply Lemma~\ref{lem:conditional-chandak-moreau-poisson}. Since
\(\inner{\nabla M_\xi(e_n)}{H_{\theta_n}(Y_n)}\) is $\fF_n$-measurable,
\begin{align}
  \E_n\bigl[V_{n+1}^\xi\bigr]
  =& \E_n\bigl[M_\xi(e_{n+1})\bigr]
  +\alpha_{n+1}
  \E_n\bigl[
    \inner{\nabla M_\xi(e_{n+1})}
    {H_{\theta_{n+1}}(Y_{n+1})}
  \bigr]
  +K\alpha_{n+1}^2\\
  \leq& M_\xi(e_n)
  +\alpha_n\inner{\nabla M_\xi(e_n)}{f(\theta_n)}
  +\alpha_n\inner{\nabla M_\xi(e_n)}{H_{\theta_n}(Y_n)}\\
  &-\alpha_n
  \E_n\bigl[
    \inner{\nabla M_\xi(e_{n+1})}
    {H_{\theta_{n+1}}(Y_{n+1})}
  \bigr]+C_\xi
  \alpha_n^2\bigl(\norm{e_n}_{m,\xi}^2+1\bigr)\\
  &+\alpha_{n+1}
  \E_n\bigl[
    \inner{\nabla M_\xi(e_{n+1})}
    {H_{\theta_{n+1}}(Y_{n+1})}
  \bigr]
  +K\alpha_{n+1}^2
  \explain{By Lemma~\ref{lem:conditional-chandak-moreau-poisson}}\\
  =& V_n^\xi
  +\alpha_n\inner{\nabla M_\xi(e_n)}{f(\theta_n)}+\bigl(\alpha_{n+1}-\alpha_n\bigr)
  \E_n\bigl[
    \inner{\nabla M_\xi(e_{n+1})}
    {H_{\theta_{n+1}}(Y_{n+1})}
  \bigr]\\
  &+C_\xi
  \alpha_n^2\bigl(\norm{e_n}_{m,\xi}^2+1\bigr)
  +K\bigl(\alpha_{n+1}^2-\alpha_n^2\bigr)
  \explain{By \eqref{eq:moreau-V-def}}\\
  \leq& V_n^\xi
  +\alpha_n\inner{\nabla M_\xi(e_n)}{f(\theta_n)}+\abs{\alpha_{n+1}-\alpha_n}
  \E_n\bigl[
    \abs{
      \inner{\nabla M_\xi(e_{n+1})}
      {H_{\theta_{n+1}}(Y_{n+1})}
    }
  \bigr]\\
  &+C_\xi
  \alpha_n^2\bigl(\norm{e_n}_{m,\xi}^2+1\bigr),
  \label{eq:lemma4-after-lemma3}
\end{align}
where the last step uses that $\alpha_n$ is nonincreasing and $K\geq0$.
Taking conditional expectation in \eqref{eq:lemma4-next-poisson-bound} and
substituting into \eqref{eq:lemma4-after-lemma3} yields
\begin{align}
  \E_n\bigl[V_{n+1}^\xi\bigr]
  \leq& V_n^\xi
  +\alpha_n\inner{\nabla M_\xi(e_n)}{f(\theta_n)}+\Bigl(
    C_\xi
    +C_\xi'
  \Bigr)
  r_n\bigl(\norm{e_n}_{m,\xi}^2+1\bigr),
  \label{eq:lemma4-before-contract}
\end{align}
where $r_n=\alpha_n^2+\abs{\alpha_{n+1}-\alpha_n}$.

By Lemma~\ref{lem:moreau-contract-drift},
\begin{align}
  \inner{\nabla M_\xi(e_n)}{f(\theta_n)}
  \leq& -\mu_\xi M_\xi(e_n).
  \label{eq:lemma4-contract-use}
\end{align}
Combining \eqref{eq:lemma4-before-contract} and
\eqref{eq:lemma4-contract-use}, we obtain
\begin{align}
  \E_n\bigl[V_{n+1}^\xi\bigr]
  \leq& V_n^\xi
  -\mu_\xi\alpha_nM_\xi(e_n)+\Bigl(
    C_\xi
    +C_\xi'
  \Bigr)
  r_n\bigl(\norm{e_n}_{m,\xi}^2+1\bigr).
  \label{eq:lemma4-before-M-to-V}
\end{align}
The comparison bound \eqref{eq:appendix-V-M-comparison} gives
\begin{align}
  -\mu_\xi\alpha_nM_\xi(e_n)
  =& -\mu_\xi\alpha_nV_n^\xi
  +\mu_\xi\alpha_n\bigl(V_n^\xi-M_\xi(e_n)\bigr)\\
  \leq& -\mu_\xi\alpha_nV_n^\xi
  +\mu_\xi C_{\xi,K}'
  r_n\bigl(\norm{e_n}_{m,\xi}^2+1\bigr).
  \label{eq:lemma4-M-to-V}
\end{align}
Substituting into
\eqref{eq:lemma4-before-M-to-V} then gives
\begin{align}
  \E_n\bigl[V_{n+1}^\xi\bigr]
  \leq& \bigl(1-\mu_\xi\alpha_n\bigr)V_n^\xi
  +(C_\xi+C_\xi'+\mu_\xi C_{\xi,K}')r_n
  \bigl(\norm{e_n}_{m,\xi}^2+1\bigr)\\
  \leq& \bigl(1-\mu_\xi\alpha_n\bigr)V_n^\xi
  +(C_\xi+C_\xi'+\mu_\xi C_{\xi,K}')r_n
  \bigl(4V_n^\xi+1\bigr). 
  \explain{By \eqref{eq:lemma4-coercive-use}}
\end{align}
Let $C_{\xi,K}\doteq 4(C_\xi+C_\xi'+\mu_\xi C_{\xi,K}')$ then completes the proof.
\end{proof}

\subsection{Proof of Theorem~\ref{thm:moreau-power-rates}}
\label{proof:moreau-power-rates}

\begin{proof}
Fix $\zeta$ in the range claimed by the theorem. If $\frac12<\eta<1$,
choose any $\xi>0$ satisfying \eqref{eq:xi-contraction-choice}. If
$\eta=1$, then $\zeta<\min\{1,2(1-\kappa)\alpha\}$. Since
\begin{align}
  \mu_\xi
  =& 2\left(1-\kappa\frac{u_\xi}{\ell_\xi}\right)
  \longrightarrow 2(1-\kappa)
  \quad \text{as } \xi\downarrow0,
\end{align}
we may choose $\xi>0$ small enough such that
\begin{align}
\label{eq:theorem-proof-xi-choice}
  \kappa\frac{u_\xi}{\ell_\xi}
  &<1,
  \quad
  \zeta<\min\{1,\mu_\xi\alpha\}.
\end{align}

Let $K_\xi$ be the constant from
Lemma~\ref{lem:moreau-shifted-drift}, and fix $K\geq K_\xi$. Then there
exist constants $C_{\xi,K}<\infty$ and $N_{\xi,K}<\infty$ such that, for
every $n\geq N_{\xi,K}$,
\begin{align}
\label{eq:theorem-proof-V-coercive}
  V_n^\xi
  \geq& \frac14\norm{e_n}_{m,\xi}^2
  \geq 0,\\
\label{eq:theorem-proof-V-drift}
  \E_n\bigl[V_{n+1}^\xi\bigr]
  \leq& \bigl(1-\mu_\xi\alpha_n+C_{\xi,K}r_n\bigr)V_n^\xi
  +C_{\xi,K}r_n .
\end{align}
Thus Lemma~\ref{lem:power-law-drift-rates} applies on this deterministic
tail with $X_n=V_n^\xi$, $c_0=\mu_\xi$, and $C_0=C_{\xi,K}$. Hence
\begin{align}
\label{eq:theorem-proof-V-rate}
  (n+1)^\zeta V_n^\xi
  \longrightarrow& 0
  \quad \text{a.s.}
\end{align}
Indeed, when $\frac12<\eta<1$, this follows from
$\zeta<2\eta-1$. When $\eta=1$, this follows from
\eqref{eq:theorem-proof-xi-choice}.

It remains to translate the rate for $V_n^\xi$ into the rate for the
original error. On the same deterministic tail,
\begin{align}
  (n+1)^\zeta\norm{\theta_n-\theta_*}^2
  = (n+1)^\zeta\norm{e_n}^2
  \leq u_\xi^2(n+1)^\zeta\norm{e_n}_{m,\xi}^2
  \leq 4u_\xi^2(n+1)^\zeta V_n^\xi.
\end{align}
Thus $(n+1)^\zeta\norm{\theta_n-\theta_*}^2 \rightarrow 0$ almost surely.
Since changing finitely many initial terms does not affect the almost
sure limit, the claimed convergence holds for the full sequence.
\end{proof}

% \sz{You need to put the statement of that Lemma in main text if you need to refer to it.} 

\subsection{Proof of Corollary~\ref{cor:ms-rates}}
\label{proof:ms-rates}
\begin{proof}
The proof of Lemma~\ref{lem:moreau-shifted-drift} does not use square summability of the learning rates.
Thus, for any fixed $0<\eta\leq1$, the same argument gives the following deterministic-tail bounds.
Fix $K\geq K_\xi$. There exist constants $C_{\xi,K}<\infty$ and a deterministic index $N_1$ such that, for every $n\geq N_1$,
\begin{align}
\label{eq:appendix-ms-coercive}
  V_n^\xi
  \geq& \frac14\norm{e_n}_{m,\xi}^2
  \geq 0,\\
\label{eq:appendix-ms-drift}
  \E_n\qty[V_{n+1}^\xi]
  \leq& \bigl(1-\mu_\xi\alpha_n+C_{\xi,K}r_n\bigr)V_n^\xi
  +C_{\xi,K}r_n.
\end{align}

We first handle the finite initial segment before this deterministic tail.
For every $m\geq0$,
\begin{align}
  \norm{e_{m+1}}+1
  \leq& \norm{e_m}+1+\alpha_m\norm{F(\theta_m,Y_m)}\\
  \leq& \norm{e_m}+1+L_F\alpha_m\bigl(\norm{e_m}+1\bigr)
  \explain{By Assumption~\ref{asp:lip}}\\
  =& \bigl(1+L_F\alpha_m\bigr)\bigl(\norm{e_m}+1\bigr).
\end{align}
Iterating to any fixed time $m$ gives
\begin{align}
\label{eq:appendix-ms-finite-initial-e}
  \norm{e_m}+1
  \leq& \bigl(\norm{e_0}+1\bigr)
  \prod_{i=0}^{m-1}\bigl(1+L_F\alpha_i\bigr)
  <\infty .
\end{align}
Hence, by \eqref{eq:moreau-gradient-cauchy}, Assumption~\ref{asp:poisson}, and norm comparison,
\begin{align}
  \abs{V_m^\xi}
  \leq& \frac12\norm{e_m}_{m,\xi}^2
  +\alpha_m\norm{e_m}_{m,\xi}
  \norm{H_{\theta_m}(Y_m)}_{m,\xi}
  +K\alpha_m^2\\
  \leq& \frac12\ell_\xi^{-2}\norm{e_m}^2
  +L_H\ell_\xi^{-2}\alpha_m\norm{e_m}
  \bigl(\norm{e_m}+1\bigr)
  +K\alpha_m^2
  <\infty .
  \label{eq:appendix-ms-finite-initial-V}
\end{align}
Consequently, for every deterministic index $m$, $\E\qty[\abs{V_m^\xi}]<\infty$.

For $0<\eta\leq1$, the mean-value theorem gives
\begin{align}
  \abs{\alpha_{n+1}-\alpha_n}
  =& \alpha\Bigl((n+1)^{-\eta}-(n+2)^{-\eta}\Bigr)\\
  \leq& \alpha\eta(n+1)^{-\eta-1}\\
  \leq& \frac{\eta}{\alpha}\alpha_n^2,
\end{align}
where the last step uses $\eta\leq1$. Therefore, $r_n \leq C_r\alpha_n^2$, where $C_r\doteq 1+\frac{\eta}{\alpha}$.
Taking total expectation in \eqref{eq:appendix-ms-drift} then gives, for every $n\geq N_1$,
\begin{align}
\label{eq:appendix-ms-recursion}
  \E\qty[V_{n+1}^\xi]
  \leq& \bigl(1-\mu_\xi\alpha_n+C_{\xi,K}C_r\alpha_n^2\bigr)
  \E\qty[V_n^\xi]
  +C_{\xi,K}C_r\alpha_n^2 .
\end{align}

We first consider $0<\eta<1$. Since $\alpha_n=\alpha(n+1)^{-\eta}$, choose a deterministic index $N_2$ such that
\begin{align}
  C_{\xi,K}C_r\alpha_n^2
  \leq& \frac{\mu_\xi\alpha}{2}(n+1)^{-\eta}
\end{align}
for every $n\geq N_2$. Hence \eqref{eq:appendix-ms-recursion} implies
\begin{align}
\label{eq:appendix-ms-recursion-subharmonic}
  \E\qty[V_{n+1}^\xi]
  \leq& \left(1-\frac{\mu_\xi\alpha}{2(n+1)^\eta}\right)
  \E\qty[V_n^\xi]
  +\frac{C_{\xi,K}C_r\alpha^2}{(n+1)^{2\eta}}
\end{align}
for every $n\geq \max\qty{N_1,N_2}$.
Choose a deterministic index $N_3$ such that, for every $n\geq N_3$,
\begin{align}
\label{eq:appendix-ms-power-comparison}
  (n+2)^{-\eta}
  \geq& (n+1)^{-\eta}
  -\frac{\mu_\xi\alpha}{4}(n+1)^{-2\eta}.
\end{align}
This is possible because $\eta<1$. Set
\begin{align}
  N
  \doteq& \max\qty{N_1,N_2,N_3}.
\end{align}
Choose $B<\infty$ large enough such that
\begin{align}
  \E\qty[V_N^\xi]
  \leq& B(N+1)^{-\eta},
  \quad
  \frac{B\mu_\xi\alpha}{4}
  \geq C_{\xi,K}C_r\alpha^2 .
\end{align}
Then induction gives, for every $n\geq N$,
\begin{align}
\label{eq:appendix-ms-subharmonic-rate}
  \E\qty[V_n^\xi]
  \leq& B(n+1)^{-\eta}.
\end{align}
Indeed, if the bound holds at time $n$, then
\begin{align}
  \E\qty[V_{n+1}^\xi]
  \leq& B(n+1)^{-\eta}
  -\frac{B\mu_\xi\alpha}{2}(n+1)^{-2\eta}
  +C_{\xi,K}C_r\alpha^2(n+1)^{-2\eta}
  \explain{By \eqref{eq:appendix-ms-recursion-subharmonic}}\\
  \leq& B(n+1)^{-\eta}
  -\frac{B\mu_\xi\alpha}{4}(n+1)^{-2\eta}\\
  \leq& B(n+2)^{-\eta}
  \explain{By \eqref{eq:appendix-ms-power-comparison}}.
\end{align}
Thus $\E\qty[V_n^\xi]=\fO(n^{-\eta})$ when $0<\eta<1$.

Now consider $\eta=1$. In this case \eqref{eq:appendix-ms-recursion} gives, for every $n\geq N_1$,
\begin{align}
\label{eq:appendix-ms-recursion-harmonic}
  \E\qty[V_{n+1}^\xi]
  \leq& \left(
    1-\frac{\mu_\xi\alpha}{n+1}
    +\frac{C_{\xi,K}C_r\alpha^2}{(n+1)^2}
  \right)\E\qty[V_n^\xi]
  +\frac{C_{\xi,K}C_r\alpha^2}{(n+1)^2}.
\end{align}
Choose a deterministic index $N_4$ such that, for every $j\geq N_4$,
\begin{align}
  1-\frac{\mu_\xi\alpha}{j+1}
  +\frac{C_{\xi,K}C_r\alpha^2}{(j+1)^2}
  \geq& 0.
\end{align}
Set
\begin{align}
  N
  \doteq& \max\qty{N_1,N_4}.
\end{align}
For $n>i\geq N-1$,
\begin{align}
\label{eq:appendix-product-bound}
  &\prod_{j=i+1}^{n-1}
  \left(
    1-\frac{\mu_\xi\alpha}{j+1}
    +\frac{C_{\xi,K}C_r\alpha^2}{(j+1)^2}
  \right)\\
  \leq& \exp\left(
    -\mu_\xi\alpha\sum_{j=i+1}^{n-1}\frac{1}{j+1}
    +C_{\xi,K}C_r\alpha^2
    \sum_{j=i+1}^{n-1}\frac{1}{(j+1)^2}
  \right)\\
  \leq& \exp\left(
    C_{\xi,K}C_r\alpha^2
    \sum_{j=0}^{\infty}\frac{1}{(j+1)^2}
  \right)
  \left(\frac{i+2}{n+1}\right)^{\mu_\xi\alpha}\\
  \leq& \exp\left(2C_{\xi,K}C_r\alpha^2\right)
  \left(\frac{i+2}{n+1}\right)^{\mu_\xi\alpha}.
\end{align}
Iterating \eqref{eq:appendix-ms-recursion-harmonic} from the deterministic time $N$ yields
\begin{align}
  \E\qty[V_n^\xi]
  \leq& \E\qty[V_N^\xi]
  \prod_{j=N}^{n-1}
  \left(
    1-\frac{\mu_\xi\alpha}{j+1}
    +\frac{C_{\xi,K}C_r\alpha^2}{(j+1)^2}
  \right)\\
  &+C_{\xi,K}C_r\alpha^2
  \sum_{i=N}^{n-1}\frac{1}{(i+1)^2}
  \prod_{j=i+1}^{n-1}
  \left(
    1-\frac{\mu_\xi\alpha}{j+1}
    +\frac{C_{\xi,K}C_r\alpha^2}{(j+1)^2}
  \right)\\
  \leq& \exp\left(2C_{\xi,K}C_r\alpha^2\right)\E\qty[V_N^\xi]
  \left(\frac{N+1}{n+1}\right)^{\mu_\xi\alpha}\\
  &+C_{\xi,K}C_r\alpha^2\exp\left(2C_{\xi,K}C_r\alpha^2\right)
  \sum_{i=N}^{n-1}\frac{1}{(i+1)^2}
  \left(\frac{i+2}{n+1}\right)^{\mu_\xi\alpha}.
  \label{eq:appendix-ms-harmonic-iterate}
\end{align}
The sum in \eqref{eq:appendix-ms-harmonic-iterate} is
$\fO(n^{-\mu_\xi\alpha})$ when $\mu_\xi\alpha<1$,
$\fO(\log n/n)$ when $\mu_\xi\alpha=1$, and
$\fO(n^{-1})$ when $\mu_\xi\alpha>1$. Therefore,
\begin{align}
\label{eq:appendix-ms-harmonic-rate}
  \E\qty[V_n^\xi]
  =&
  \begin{cases}
    \fO(n^{-\min\{\mu_\xi\alpha,1\}}),
    & \text{if } \mu_\xi\alpha\neq1,\\
    \fO(\log n/n),
    & \text{if } \mu_\xi\alpha=1.
  \end{cases}
\end{align}

Finally, for every $n\geq N$, \eqref{eq:appendix-ms-coercive} and
\eqref{eq:m-star-equivalence} give
\begin{align}
  \norm{\theta_n-\theta_*}^2
  =& \norm{e_n}^2 \leq u_\xi^2\norm{e_n}_{m,\xi}^2 \leq 4u_\xi^2V_n^\xi.
\end{align}
Taking expectations and using \eqref{eq:appendix-ms-subharmonic-rate} and
\eqref{eq:appendix-ms-harmonic-rate} gives the stated mean-square rates on the deterministic tail.
The finitely many indices $n<N$ have finite second moments by
\eqref{eq:appendix-ms-finite-initial-e}, so they do not affect the asymptotic $\fO(\cdot)$ rates.
\end{proof}

\section{Proof of Proposition~\ref{prop:gtd-assumptions}}
\label{proof:gtd-assumptions}
\begin{proof}
First, Assumption~\ref{asp:invariant} holds.  The process $Y_n=(S_n,A_n,\ldots,S_{n+N},A_{n+N})$ is a finite-state Markov chain induced by the behavior state-action chain.  Since the latter is irreducible and aperiodic, the window chain has a unique invariant distribution $\varpi$ on its recurrent class.  All quantities in \eqref{eq:gtd-A}--\eqref{eq:gtd-b} are bounded on this finite state space, so $F_{\rm GTD}(w,\cdot)$ is $\varpi$-integrable for every $w$.

Second, Assumption~\ref{asp:ctct} holds by Condition~\ref{cond:gtd-hurwitz} and the construction of $T_{\rm GTD}$.  Indeed,
\begin{align}
    f_{\rm GTD}(w)
    \doteq
    \E_{\varpi}[F_{\rm GTD}(w,Y)]
    =
    \beta_{c,\rho}\bigl(\bar A_{c,\rho}w-\bar b_{c,\rho}\bigr),
\end{align}
and hence $w+f_{\rm GTD}(w)=T_{\rm GTD}(w)$, which is a $\kappa_{\rm GTD}$-contraction in $\norm{\cdot}_{\rm GTD}$ by \eqref{eq:gtd-contract-ineq}.

Third, Assumption~\ref{asp:lip} holds.  Because $N$ is fixed, rewards and features are bounded, and $c,\rho$ are bounded, the matrices $A_{c,\rho}(y)$ and vectors $b_{c,\rho}(y)$ are uniformly bounded over all windows $y$.  Therefore $F_{\rm GTD}$ is affine in $w$ with uniformly bounded coefficients.  In particular, after changing constants to account for norm equivalence in finite dimension, there is $L<\infty$ such that for all $w,w'\in\mathbb{R}^d$ and all windows $y$,
\begin{align}
    \norm{F_{\rm GTD}(w,y)-F_{\rm GTD}(w',y)}_{\rm GTD}
    &\leq
    L\norm{w-w'}_{\rm GTD},\\
    \norm{F_{\rm GTD}(w^*_{c,\rho},y)}_{\rm GTD}
    &\leq
    L .
\end{align}

Finally, Assumption~\ref{asp:poisson} holds by the finite-state Poisson-equation argument.  Let $P_Y$ be the transition matrix of the window chain.  For each fixed $w$, the centered function $F_{\rm GTD}(w,\cdot)-f_{\rm GTD}(w)$ has mean zero under $\varpi$, and the Poisson equation admits the solution
\begin{align}
\label{eq:gtd-poisson-solution}
    H_w(y)
    \doteq
    \sum_{t=0}^{\infty}
    P_Y^t\bigl(F_{\rm GTD}(w,\cdot)-f_{\rm GTD}(w)\bigr)(y),
\end{align}
where the series converges componentwise on the recurrent class by finite-state geometric ergodicity.  Since $F_{\rm GTD}$ is affine in $w$ with uniformly bounded coefficients, and the linear operator in \eqref{eq:gtd-poisson-solution} is bounded on centered functions over the finite state space, the Poisson solution inherits the required Lipschitz and growth bounds:
\begin{align}
    \norm{H_w(y)-H_{w'}(y)}_{\rm GTD}
    \leq L_H\norm{w-w'}_{\rm GTD},
    \quad
    \norm{H_{w^*_{c,\rho}}(y)}_{\rm GTD}\leq L_H
\end{align}
for a finite constant $L_H$.  This verifies Assumption~\ref{asp:poisson}.  The identification $\alpha_n=a_n/\beta_{c,\rho}$ in \eqref{eq:gtd-SA-identification} gives the learning-rate form required by Assumption~\ref{asp:lrn_rt}, with stepsize constant $a/\beta_{c,\rho}$.
\end{proof}

\newpage
\section*{NeurIPS Paper Checklist}

\begin{enumerate}

\item {\bf Claims}
    \item[] Question: Do the main claims made in the abstract and introduction accurately reflect the paper's contributions and scope?
    \item[] Answer: \answerYes{} % Replace by \answerYes{}, \answerNo{}, or \answerNA{}.
    \item[] Justification: The abstract and introduction state that the paper studies almost sure convergence rates for stochastic approximation with Markovian noise under contractive expected updates, including possibly non-Euclidean contraction norms. The main technical and rate claims are formalized in Theorem 1 under Assumptions 2.1--2.5, and the reinforcement-learning application claims are formalized in Proposition 1 and Theorem 2. The application section also clarifies that the generalized TD recursion and its Hurwitz structural conditions are not new contributions; the contribution is to combine these structural properties with the proposed Poisson--Moreau drift to obtain pathwise almost sure rates under single-trajectory Markovian sampling.
    \item[] Guidelines:
    \begin{itemize}
        \item The answer \answerNA{} means that the abstract and introduction do not include the claims made in the paper.
        \item The abstract and/or introduction should clearly state the claims made, including the contributions made in the paper and important assumptions and limitations. A \answerNo{} or \answerNA{} answer to this question will not be perceived well by the reviewers. 
        \item The claims made should match theoretical and experimental results, and reflect how much the results can be expected to generalize to other settings. 
        \item It is fine to include aspirational goals as motivation as long as it is clear that these goals are not attained by the paper. 
    \end{itemize}

\item {\bf Limitations}
    \item[] Question: Does the paper discuss the limitations of the work performed by the authors?
    \item[] Answer: \answerYes{} % Replace by \answerYes{}, \answerNo{}, or \answerNA{}.
    \item[] Justification: The scope and limitations are discussed through the assumptions and surrounding remarks rather than in a standalone limitations section. In particular, Section 2 states the contractivity, Lipschitz, learning-rate, and Poisson-equation regularity assumptions required for the main theorem, and Remark 1 explains when the Poisson-equation assumption is automatic or structurally justified, while noting that a full treatment of the general-state-space measure-theoretic conditions is outside the paper's focus. Section 4 further limits the main RL application to finite MDPs with bounded rewards/features, strict behavior coverage, and a Hurwitz condition for the generalized TD operator. The conclusion also identifies relaxing the Poisson-equation regularity assumptions and extending the framework to broader SA/RL classes as future work.
    \item[] Guidelines:
    \begin{itemize}
        \item The answer \answerNA{} means that the paper has no limitation while the answer \answerNo{} means that the paper has limitations, but those are not discussed in the paper. 
        \item The authors are encouraged to create a separate ``Limitations'' section in their paper.
        \item The paper should point out any strong assumptions and how robust the results are to violations of these assumptions (e.g., independence assumptions, noiseless settings, model well-specification, asymptotic approximations only holding locally). The authors should reflect on how these assumptions might be violated in practice and what the implications would be.
        \item The authors should reflect on the scope of the claims made, e.g., if the approach was only tested on a few datasets or with a few runs. In general, empirical results often depend on implicit assumptions, which should be articulated.
        \item The authors should reflect on the factors that influence the performance of the approach. For example, a facial recognition algorithm may perform poorly when image resolution is low or images are taken in low lighting. Or a speech-to-text system might not be used reliably to provide closed captions for online lectures because it fails to handle technical jargon.
        \item The authors should discuss the computational efficiency of the proposed algorithms and how they scale with dataset size.
        \item If applicable, the authors should discuss possible limitations of their approach to address problems of privacy and fairness.
        \item While the authors might fear that complete honesty about limitations might be used by reviewers as grounds for rejection, a worse outcome might be that reviewers discover limitations that aren't acknowledged in the paper. The authors should use their best judgment and recognize that individual actions in favor of transparency play an important role in developing norms that preserve the integrity of the community. Reviewers will be specifically instructed to not penalize honesty concerning limitations.
    \end{itemize}

\item {\bf Theory assumptions and proofs}
    \item[] Question: For each theoretical result, does the paper provide the full set of assumptions and a complete (and correct) proof?
    \item[] Answer: \answerYes{} % Replace by \answerYes{}, \answerNo{}, or \answerNA{}.
    \item[] Justification: The paper explicitly states the assumptions used for the general stochastic approximation result in Assumptions 2.1--2.5, and Theorem 1 is proved in Appendix B.4 using the supporting lemmas developed in Section 3 and Appendix B. The generalized TD application states its algorithm-specific assumptions in Section 4, verifies the abstract assumptions in Proposition 1 with proof in Appendix C, and then derives Theorem 2 by applying Theorem 1. Auxiliary results used in the proof, including the Moreau-envelope facts, the Poisson--Moreau drift recursion, and the almost-supermartingale rate lemma, are stated and proved or properly referenced in the appendix.
    
    \item[] Guidelines:
    \begin{itemize}
        \item The answer \answerNA{} means that the paper does not include theoretical results. 
        \item All the theorems, formulas, and proofs in the paper should be numbered and cross-referenced.
        \item All assumptions should be clearly stated or referenced in the statement of any theorems.
        \item The proofs can either appear in the main paper or the supplemental material, but if they appear in the supplemental material, the authors are encouraged to provide a short proof sketch to provide intuition. 
        \item Inversely, any informal proof provided in the core of the paper should be complemented by formal proofs provided in appendix or supplemental material.
        \item Theorems and Lemmas that the proof relies upon should be properly referenced. 
    \end{itemize}

    \item {\bf Experimental result reproducibility}
    \item[] Question: Does the paper fully disclose all the information needed to reproduce the main experimental results of the paper to the extent that it affects the main claims and/or conclusions of the paper (regardless of whether the code and data are provided or not)?
    \item[] Answer: \answerNA{} % Replace by \answerYes{}, \answerNo{}, or \answerNA{}.
    \item[] Justification: \answerNA{}
    \item[] Guidelines:
    \begin{itemize}
        \item The answer \answerNA{} means that the paper does not include experiments.
        \item If the paper includes experiments, a \answerNo{} answer to this question will not be perceived well by the reviewers: Making the paper reproducible is important, regardless of whether the code and data are provided or not.
        \item If the contribution is a dataset and\slash or model, the authors should describe the steps taken to make their results reproducible or verifiable. 
        \item Depending on the contribution, reproducibility can be accomplished in various ways. For example, if the contribution is a novel architecture, describing the architecture fully might suffice, or if the contribution is a specific model and empirical evaluation, it may be necessary to either make it possible for others to replicate the model with the same dataset, or provide access to the model. In general. releasing code and data is often one good way to accomplish this, but reproducibility can also be provided via detailed instructions for how to replicate the results, access to a hosted model (e.g., in the case of a large language model), releasing of a model checkpoint, or other means that are appropriate to the research performed.
        \item While NeurIPS does not require releasing code, the conference does require all submissions to provide some reasonable avenue for reproducibility, which may depend on the nature of the contribution. For example
        \begin{enumerate}
            \item If the contribution is primarily a new algorithm, the paper should make it clear how to reproduce that algorithm.
            \item If the contribution is primarily a new model architecture, the paper should describe the architecture clearly and fully.
            \item If the contribution is a new model (e.g., a large language model), then there should either be a way to access this model for reproducing the results or a way to reproduce the model (e.g., with an open-source dataset or instructions for how to construct the dataset).
            \item We recognize that reproducibility may be tricky in some cases, in which case authors are welcome to describe the particular way they provide for reproducibility. In the case of closed-source models, it may be that access to the model is limited in some way (e.g., to registered users), but it should be possible for other researchers to have some path to reproducing or verifying the results.
        \end{enumerate}
    \end{itemize}

\item {\bf Open access to data and code}
    \item[] Question: Does the paper provide open access to the data and code, with sufficient instructions to faithfully reproduce the main experimental results, as described in supplemental material?
    \item[] Answer: \answerNA{} % Replace by \answerYes{}, \answerNo{}, or \answerNA{}.
    \item[] Justification: \answerNA{}
    \item[] Guidelines:
    \begin{itemize}
        \item The answer \answerNA{} means that paper does not include experiments requiring code.
        \item Please see the NeurIPS code and data submission guidelines (\url{https://neurips.cc/public/guides/CodeSubmissionPolicy}) for more details.
        \item While we encourage the release of code and data, we understand that this might not be possible, so \answerNo{} is an acceptable answer. Papers cannot be rejected simply for not including code, unless this is central to the contribution (e.g., for a new open-source benchmark).
        \item The instructions should contain the exact command and environment needed to run to reproduce the results. See the NeurIPS code and data submission guidelines (\url{https://neurips.cc/public/guides/CodeSubmissionPolicy}) for more details.
        \item The authors should provide instructions on data access and preparation, including how to access the raw data, preprocessed data, intermediate data, and generated data, etc.
        \item The authors should provide scripts to reproduce all experimental results for the new proposed method and baselines. If only a subset of experiments are reproducible, they should state which ones are omitted from the script and why.
        \item At submission time, to preserve anonymity, the authors should release anonymized versions (if applicable).
        \item Providing as much information as possible in supplemental material (appended to the paper) is recommended, but including URLs to data and code is permitted.
    \end{itemize}

\item {\bf Experimental setting/details}
    \item[] Question: Does the paper specify all the training and test details (e.g., data splits, hyperparameters, how they were chosen, type of optimizer) necessary to understand the results?
    \item[] Answer: \answerNA{} % Replace by \answerYes{}, \answerNo{}, or \answerNA{}.
    \item[] Justification: \answerNA{}
    \item[] Guidelines:
    \begin{itemize}
        \item The answer \answerNA{} means that the paper does not include experiments.
        \item The experimental setting should be presented in the core of the paper to a level of detail that is necessary to appreciate the results and make sense of them.
        \item The full details can be provided either with the code, in appendix, or as supplemental material.
    \end{itemize}

\item {\bf Experiment statistical significance}
    \item[] Question: Does the paper report error bars suitably and correctly defined or other appropriate information about the statistical significance of the experiments?
    \item[] Answer: \answerNA{} % Replace by \answerYes{}, \answerNo{}, or \answerNA{}.
    \item[] Justification: \answerNA{}
    \item[] Guidelines:
    \begin{itemize}
        \item The answer \answerNA{} means that the paper does not include experiments.
        \item The authors should answer \answerYes{} if the results are accompanied by error bars, confidence intervals, or statistical significance tests, at least for the experiments that support the main claims of the paper.
        \item The factors of variability that the error bars are capturing should be clearly stated (for example, train/test split, initialization, random drawing of some parameter, or overall run with given experimental conditions).
        \item The method for calculating the error bars should be explained (closed form formula, call to a library function, bootstrap, etc.)
        \item The assumptions made should be given (e.g., Normally distributed errors).
        \item It should be clear whether the error bar is the standard deviation or the standard error of the mean.
        \item It is OK to report 1-sigma error bars, but one should state it. The authors should preferably report a 2-sigma error bar than state that they have a 96\% CI, if the hypothesis of Normality of errors is not verified.
        \item For asymmetric distributions, the authors should be careful not to show in tables or figures symmetric error bars that would yield results that are out of range (e.g., negative error rates).
        \item If error bars are reported in tables or plots, the authors should explain in the text how they were calculated and reference the corresponding figures or tables in the text.
    \end{itemize}

\item {\bf Experiments compute resources}
    \item[] Question: For each experiment, does the paper provide sufficient information on the computer resources (type of compute workers, memory, time of execution) needed to reproduce the experiments?
    \item[] Answer: \answerNA{} % Replace by \answerYes{}, \answerNo{}, or \answerNA{}.
    \item[] Justification: \answerNA{}
    \item[] Guidelines:
    \begin{itemize}
        \item The answer \answerNA{} means that the paper does not include experiments.
        \item The paper should indicate the type of compute workers CPU or GPU, internal cluster, or cloud provider, including relevant memory and storage.
        \item The paper should provide the amount of compute required for each of the individual experimental runs as well as estimate the total compute. 
        \item The paper should disclose whether the full research project required more compute than the experiments reported in the paper (e.g., preliminary or failed experiments that didn't make it into the paper). 
    \end{itemize}
    
\item {\bf Code of ethics}
    \item[] Question: Does the research conducted in the paper conform, in every respect, with the NeurIPS Code of Ethics \url{https://neurips.cc/public/EthicsGuidelines}?
    \item[] Answer: \answerYes{}{} % Replace by \answerYes{}, \answerNo{}, or \answerNA{}.
    \item[] Justification: This paper is a theoretical work on stochastic approximation and reinforcement learning convergence rates; it does not involve human subjects, data collection, model release, or deployment, and the research conforms to the NeurIPS Code of Ethics.
    \item[] Guidelines:
    \begin{itemize}
        \item The answer \answerNA{} means that the authors have not reviewed the NeurIPS Code of Ethics.
        \item If the authors answer \answerNo, they should explain the special circumstances that require a deviation from the Code of Ethics.
        \item The authors should make sure to preserve anonymity (e.g., if there is a special consideration due to laws or regulations in their jurisdiction).
    \end{itemize}

\item {\bf Broader impacts}
    \item[] Question: Does the paper discuss both potential positive societal impacts and negative societal impacts of the work performed?
    \item[] Answer: \answerNA{} % Replace by \answerYes{}, \answerNo{}, or \answerNA{}.
    \item[] Justification: \answerNA{}
    \item[] Guidelines:
    \begin{itemize}
        \item The answer \answerNA{} means that there is no societal impact of the work performed.
        \item If the authors answer \answerNA{} or \answerNo, they should explain why their work has no societal impact or why the paper does not address societal impact.
        \item Examples of negative societal impacts include potential malicious or unintended uses (e.g., disinformation, generating fake profiles, surveillance), fairness considerations (e.g., deployment of technologies that could make decisions that unfairly impact specific groups), privacy considerations, and security considerations.
        \item The conference expects that many papers will be foundational research and not tied to particular applications, let alone deployments. However, if there is a direct path to any negative applications, the authors should point it out. For example, it is legitimate to point out that an improvement in the quality of generative models could be used to generate Deepfakes for disinformation. On the other hand, it is not needed to point out that a generic algorithm for optimizing neural networks could enable people to train models that generate Deepfakes faster.
        \item The authors should consider possible harms that could arise when the technology is being used as intended and functioning correctly, harms that could arise when the technology is being used as intended but gives incorrect results, and harms following from (intentional or unintentional) misuse of the technology.
        \item If there are negative societal impacts, the authors could also discuss possible mitigation strategies (e.g., gated release of models, providing defenses in addition to attacks, mechanisms for monitoring misuse, mechanisms to monitor how a system learns from feedback over time, improving the efficiency and accessibility of ML).
    \end{itemize}
    
\item {\bf Safeguards}
    \item[] Question: Does the paper describe safeguards that have been put in place for responsible release of data or models that have a high risk for misuse (e.g., pre-trained language models, image generators, or scraped datasets)?
    \item[] Answer: \answerNA{} % Replace by \answerYes{}, \answerNo{}, or \answerNA{}.
    \item[] Justification: \answerNA{}
    \item[] Guidelines:
    \begin{itemize}
        \item The answer \answerNA{} means that the paper poses no such risks.
        \item Released models that have a high risk for misuse or dual-use should be released with necessary safeguards to allow for controlled use of the model, for example by requiring that users adhere to usage guidelines or restrictions to access the model or implementing safety filters. 
        \item Datasets that have been scraped from the Internet could pose safety risks. The authors should describe how they avoided releasing unsafe images.
        \item We recognize that providing effective safeguards is challenging, and many papers do not require this, but we encourage authors to take this into account and make a best faith effort.
    \end{itemize}

\item {\bf Licenses for existing assets}
    \item[] Question: Are the creators or original owners of assets (e.g., code, data, models), used in the paper, properly credited and are the license and terms of use explicitly mentioned and properly respected?
    \item[] Answer: \answerNA{} % Replace by \answerYes{}, \answerNo{}, or \answerNA{}.
    \item[] Justification: \answerNA{}
    \item[] Guidelines:
    \begin{itemize}
        \item The answer \answerNA{} means that the paper does not use existing assets.
        \item The authors should cite the original paper that produced the code package or dataset.
        \item The authors should state which version of the asset is used and, if possible, include a URL.
        \item The name of the license (e.g., CC-BY 4.0) should be included for each asset.
        \item For scraped data from a particular source (e.g., website), the copyright and terms of service of that source should be provided.
        \item If assets are released, the license, copyright information, and terms of use in the package should be provided. For popular datasets, \url{paperswithcode.com/datasets} has curated licenses for some datasets. Their licensing guide can help determine the license of a dataset.
        \item For existing datasets that are re-packaged, both the original license and the license of the derived asset (if it has changed) should be provided.
        \item If this information is not available online, the authors are encouraged to reach out to the asset's creators.
    \end{itemize}

\item {\bf New assets}
    \item[] Question: Are new assets introduced in the paper well documented and is the documentation provided alongside the assets?
    \item[] Answer: \answerNA{} % Replace by \answerYes{}, \answerNo{}, or \answerNA{}.
    \item[] Justification: \answerNA{}
    \item[] Guidelines:
    \begin{itemize}
        \item The answer \answerNA{} means that the paper does not release new assets.
        \item Researchers should communicate the details of the dataset\slash code\slash model as part of their submissions via structured templates. This includes details about training, license, limitations, etc. 
        \item The paper should discuss whether and how consent was obtained from people whose asset is used.
        \item At submission time, remember to anonymize your assets (if applicable). You can either create an anonymized URL or include an anonymized zip file.
    \end{itemize}

\item {\bf Crowdsourcing and research with human subjects}
    \item[] Question: For crowdsourcing experiments and research with human subjects, does the paper include the full text of instructions given to participants and screenshots, if applicable, as well as details about compensation (if any)? 
    \item[] Answer: \answerNA{} % Replace by \answerYes{}, \answerNo{}, or \answerNA{}.
    \item[] Justification: \answerNA{}
    \item[] Guidelines:
    \begin{itemize}
        \item The answer \answerNA{} means that the paper does not involve crowdsourcing nor research with human subjects.
        \item Including this information in the supplemental material is fine, but if the main contribution of the paper involves human subjects, then as much detail as possible should be included in the main paper. 
        \item According to the NeurIPS Code of Ethics, workers involved in data collection, curation, or other labor should be paid at least the minimum wage in the country of the data collector. 
    \end{itemize}

\item {\bf Institutional review board (IRB) approvals or equivalent for research with human subjects}
    \item[] Question: Does the paper describe potential risks incurred by study participants, whether such risks were disclosed to the subjects, and whether Institutional Review Board (IRB) approvals (or an equivalent approval/review based on the requirements of your country or institution) were obtained?
    \item[] Answer: \answerNA{} % Replace by \answerYes{}, \answerNo{}, or \answerNA{}.
    \item[] Justification: \answerNA{}
    \item[] Guidelines:
    \begin{itemize}
        \item The answer \answerNA{} means that the paper does not involve crowdsourcing nor research with human subjects.
        \item Depending on the country in which research is conducted, IRB approval (or equivalent) may be required for any human subjects research. If you obtained IRB approval, you should clearly state this in the paper. 
        \item We recognize that the procedures for this may vary significantly between institutions and locations, and we expect authors to adhere to the NeurIPS Code of Ethics and the guidelines for their institution. 
        \item For initial submissions, do not include any information that would break anonymity (if applicable), such as the institution conducting the review.
    \end{itemize}

\item {\bf Declaration of LLM usage}
    \item[] Question: Does the paper describe the usage of LLMs if it is an important, original, or non-standard component of the core methods in this research? Note that if the LLM is used only for writing, editing, or formatting purposes and does \emph{not} impact the core methodology, scientific rigor, or originality of the research, declaration is not required.
    %this research? 
    \item[] Answer: \answerNA{} % Replace by \answerYes{}, \answerNo{}, or \answerNA{}.
    \item[] Justification: \answerNA{}
    \item[] Guidelines:
    \begin{itemize}
        \item The answer \answerNA{} means that the core method development in this research does not involve LLMs as any important, original, or non-standard components.
        \item Please refer to our LLM policy in the NeurIPS handbook for what should or should not be described.
    \end{itemize}

\end{enumerate}

\end{document}